\documentclass{article}

\PassOptionsToPackage{numbers, compress, sort}{natbib}

\usepackage[preprint]{neurips_2026}

\usepackage[utf8]{inputenc} 
\usepackage[T1]{fontenc}    
\usepackage[backref = page, colorlinks = True, linkcolor = violet!70!white, citecolor = teal]{hyperref}       
\usepackage{url}            
\usepackage{booktabs}       
\usepackage{amsfonts}       
\usepackage{nicefrac}       
\usepackage{microtype}      
\usepackage{xcolor}         
\usepackage{float}

\usepackage{amsmath}
\usepackage{amssymb}
\usepackage{mathtools}
\usepackage{amsthm}

\theoremstyle{plain}
\newtheorem{theorem}{Theorem}[section]
\newtheorem{proposition}[theorem]{Proposition}

\theoremstyle{definition}

\newtheorem{condition}{Sufficient Condition}

\theoremstyle{remark}
\newtheorem{remark}[theorem]{Remark}

\usepackage{graphicx}
\usepackage{subcaption}
\usepackage{placeins}
\usepackage{enumitem}

\usepackage{subcaption}

\usepackage{wrapfig}
\usepackage{graphicx}

\usepackage[textsize=tiny]{todonotes}

\usepackage[capitalize,noabbrev]{cleveref}

\title{FairBED: A Bayesian Experimental Design \\ Approach to Gathering Fairer Data}

\author{%
  Marcel Hedman\thanks{\texttt{marcel.hedman@jesus.ox.ac.uk}}  \\
    Department of Statistics \\
    University of Oxford \\
  \And
  Emily Alger  \\
    CeBAM, Nuffield Department of Medicine \\
    University of Oxford \\
  \AND
  Brieuc Lehmann  \\
Department of Statistical Science \\
    University College London \\  
  \And
  Chris Holmes  \\
    Department of Statistics \\
    University of Oxford \\  
  \And
  Tom Rainforth  \\
    Department of Statistics \\
    University of Oxford \\
}

\begin{document}

\maketitle

\begin{abstract}
\looseness=-1
Frameworks for ensuring fairness in machine learning typically focus on learning fair models from existing data.
But this endeavor is often undermined by biases already present in that data.
We therefore look to modify the data \emph{acquisition process} itself to help gather \emph{fairer data} that is
inherently more suitable for training fair predictors.
To this end, we introduce FairBED, which provides novel formulations for quantifying the fairness of datasets themselves based on the idea that fair datasets should be uninformative about sensitive attributes.
We then use this to construct practical fairness-aware Bayesian experimental design (BED) objectives that
maximize expected information gain about the target quantity of interest while minimizing expected information gain about sensitive attributes.
We further derive a theoretical link between FairBED and demographic parity, and show empirically that models trained on data gathered using FairBED provide improved fairness-accuracy trade-offs compared to randomly acquired data and conventional BED.
\end{abstract}

\newcommand{\MH}[1]{\textcolor{purple}{#1}}
\newcommand{\EA}[1]{\textcolor{blue}{#1}}


\section{Introduction}
\label{sec:intro}

\looseness=-1
 In impactful domains such as healthcare \citep{health-bias}, online advertising \citep{advertising-bias}, and criminal justice \citep{chouldechova2017fairpredictiondisparateimpact}, training machine learning algorithms on datasets perpetuating societal biases can lead to unfair predictions and decision-making with significant negative consequences. Various downstream mitigation strategies have been introduced to try and address this, such as data reweighting \citep{fairness-reweighting, adaptive-reweighting}, constrained optimization during training \citep{fair-representations}, and pre-processing or “data-repair” methods \citep{removing-disparate-impact, fair-ml-survey}. However, to date, the vast majority of such work has been devoted to intervening in how pre-existing datasets are \emph{modeled}, with comparatively little attention paid to the data-acquisition process itself~\cite{he2025fairness,shen2022metric,sharaf2022promoting}.

\looseness=-1
This represents a major shortfall in the fairness arsenal, as data that is inherently unfair and already encodes unwanted societal biases will at best hinder, and at worst completely undermine, our ability to use such data to produce fair models and decisions \citep{Adaptive-Sampling-Strategies}.
Thus, we should look to guide the data acquisition process toward producing fairer data from which we can learn fairer models. Importantly, this should be done in a way that ensures the data is still as informative as possible for the task at hand.

\looseness=-1
To this end, we introduce FairBED, a Bayesian experimental design (BED,~\cite{lindley1956,lindley1972,chaloner1995,rainforth2024modern}) approach that formalizes the utility of data in a way that balances the needs of effective learning and fairness. 
Namely, we first introduce two novel measures on the fairness of \emph{datasets} (as opposed to the fairness of a given model, which existing work generally focuses on) that are based on how much information the data leaks about \emph{sensitive} factors that we wish to remain invariant to (e.g., demographic identifiers or site-specific factors).
We then construct BED objectives for optimizing design decisions which trade off these fairness metrics with the informativeness of the data about target variables of interest.

The core idea is thus to make design decisions in a way that intentionally shapes the underlying data distribution to limit how much can be learned about sensitive attributes from the gathered data: if our data does not contain information about sensitive attributes, then 
downstream models and ML algorithms cannot exploit these attributes, naturally improving their fairness.
Critically, by also maximizing the informativeness of the data for predicting our target variables of interest alongside this, FairBED ensures this shaping of the gathered data does not undermine the ability to make effective downstream predictions or generalize at test-time.

\looseness=-1
While FairBED provides its own intuitive notion of fairness, we provide a formal theoretical link between it and the notion of demographic parity (DP) \citep{delbarrio2020reviewmathematicalframeworksfairness}: we show that if the data conveys no information about a sensitive variable, then models trained on this data achieve DP under certain assumptions.

\looseness=-1
Empirically, we find that FairBED consistently achieves a favorable informativeness--fairness trade-off in the data it collects, with high information gain about task-relevant parameters or predictions, while substantially reducing information leakage for sensitive attributes. 
Moreover, we find that these benefits generalize across different models trained on the collected data, showing that FairBED is able to improve the properties of the gathered datasets themselves, not just of specific models.
We also show that FairBED can be effectively combined with techniques for improving fairness at the modelling stage, improving their fairness--accuracy compared to when used on randomly sampled data.

FairBED thus provides a new perspective on fairness that is directly focused on the fairness of datasets themselves, along with a principled and practical approach to fairness-aware data acquisition, opening up new research directions and deployment opportunities in the quest for algorithmic fairness.

\section{Background}
\label{sec:background}

\subsection{Bayesian Experimental Design (BED)}
\label{subsec:BED_background}

\looseness=-1
Bayesian experimental design (BED) \citep{lindley1956,lindley1972,rainforth2024modern,huan2024optimal} is a principled approach to data gathering wherein designs $\xi\in\mathcal{X}$ are selected to maximize some expected utility of the generated data.
To do this, it postulates some Bayesian model, $p(\theta)p(y\mid \theta, \xi)$, over hypothetical possible observations, $y\in \mathcal{Y}$, and some target variables of interest $\theta$, which might themselves represent a real-world quantity, model parameters, or downstream predictions and decisions we wish to make.
This model is then used to both simulate possible data and evaluate the utility of possible data.
The latter is done by defining some utility function, $U(\xi,y)$, that should depend on the posterior $p(\theta\mid y, \xi)\propto p(\theta)p(y\mid \theta, \xi)$~\citep{chaloner1995,ryan2016review}.
The optimal design is then the one that optimizes the \emph{expected} utility over possible observations:
$\xi^* = \arg\max\nolimits_{\xi \in \mathcal{X}} \mathbb{E}_{p(y \mid \xi)}\!\left[ U(\xi,y) \right]
$
where $p(y \mid \xi) = \int p(y \mid \theta, \xi)\, p(\theta)\, d\theta$.

A canonical choice of utility in BED is \emph{information gain} (IG) about $\theta$
\citep{lindley1956,lindley1972}, defined as the reduction in Shannon entropy from
prior to posterior: $\mathrm{IG}_\theta(\xi,y)=
\mathcal{H}(p(\theta))-\mathcal{H}(p(\theta\mid\xi,y))$, yielding the expected
information gain (EIG) objective:
\begin{equation} 
\text{EIG}_\theta(\xi) = \mathbb{E}_{p(y \mid \xi)}\left[ \mathrm{KL}\left( p(\theta \mid \xi, y) \;\|\; p(\theta) \right) \right], 
\label{eq:standard_eig_objective}  
\end{equation}
which is equivalent to the mutual information between $\theta$ and $y$ under our chosen model. Optimizing the $\mathrm{EIG}$ is typically non-trivial due to its doubly intractable nature, with different estimation approaches proposed to overcome this~\citep{rainforth2018nesting,foster2019variational,foster2020unified,foster2021dad,iqbal2024nestingparticlefiltersexperimental,ao2024estimating,goda2020unbiased}.

\looseness=-1
\textbf{Adaptive design.} BED can also be extended to sequential settings, wherein it is sometimes referred to as \emph{Bayesian Adaptive Design}~\citep{mackay1992information,cavagnaro2010adaptive}. Here designs are selected conditioned on past observations. Traditionally, at round $t$, the design $\xi_t$ is chosen to maximize the \emph{incremental EIG} given the history $h_{t-1}$:
\begin{align}
\mathrm{EIG}_{\theta}(\xi_t \mid h_{t-1})
= \mathbb{E}_{p(y_t \mid \xi_t, h_{t-1})} 
\Big[\mathrm{KL}\!\big(
p(\theta \mid h_{t-1}, \xi_t, y_t)
\,\|\, 
p(\theta \mid h_{t-1})
\big)
\Big].
\end{align}
Here, $h_{t-1} = \{(\xi_1, y_1), \dots, (\xi_{t-1}, y_{t-1})\}$ denotes the previous designs and outcomes. 
This allows the design process to adapt to previous data by targeting novel information not already uncovered.

More recently, \emph{policy-based} Bayesian adaptive design approaches have also been proposed, that instead directly learn end-to-end design networks, $\pi_d : h_{t-1} \mapsto \xi_t$, that map from the history to the next design~\citep{foster2021dad,ivanova2021implicit,blau2022optimizing,hedman2025stepdad}, typically using the total EIG across all experiment iterations as their objective.
Using this idea of a design policy, we can more generally talk about the EIG of a data gathering process parameterized by some $\pi_d$ (as opposed to a single design):
\begin{align}
    \mathrm{EIG}_\theta(\pi_d) = \mathbb{E}_{p(\mathbf{D}|\pi_d)}\left[\mathrm{KL}\left( p(\theta \mid \mathbf{D}) \;\|\; p(\theta) \right)\right],
\end{align}
where $\mathbf{D}=h_{\mathrm{end}}$ now represents all the data gathered from the process and $\pi_d$ can be (i) directly optimized, (ii) restricted to a fixed finite set of static designs, or (iii) defined implicitly via iterative optimization of the incremental EIGs (or a related objective).

\subsection{Fairness in Machine Learning}
\label{subsec:fairness_in_ml}
\vspace{-3pt}

\looseness=-1
Machine learning systems are increasingly used in real-world decision-making processes, and consequently biases 
in the way they predict can produce meaningful societal harms, such as disparate access to opportunities, amplification of historical injustices, and loss of trust. 
A growing literature of methods has thus been developed to improve the fairness of machine learning predictors~\citep{fairness-reweighting, adaptive-reweighting,fair-representations,removing-disparate-impact}.

\looseness=-1
What it means for a model to be fair has been formalized in a variety of, often incompatible, ways (see e.g.,~\citep{fair-ml-survey} for a recent survey).  One broad distinction is between ``individual'' notions of fairness---treating similar individuals similarly---and ``group'' notions of fairness~\citep{he2025fairness}, promoting equity by encouraging fair model behavior with respect to sensitive attributes (e.g.~gender, age).
Our main focus will be on gathering data that improves group fairness, for which common metrics include  
demographic parity \citep{fair-representations, dwork2011fairnessawareness}, predictive parity~\citep{chouldechova2017fairpredictiondisparateimpact}, and equal opportunity \citep{NIPS2016_9d268236} (see Appendix \ref{app:fairness definitions} for more details). 
Importantly, a common feature of these metrics is their focus on remaining fair with respect to 
some sensitive attribute we denote as $\phi$, e.g.~prohibiting the use of race to impact loan decisions.

\looseness=-1
We later see that FairBED has inherent links to demographic parity.
Demographic parity is defined by our prediction distributions matching when we condition only on group members,
ensuring each group defined by some sensitive attribute(s) exhibits the same selection (or acceptance) rate as all other groups \citep{delbarrio2020reviewmathematicalframeworksfairness}. 
Mathematically,  
demographic parity is upheld if our model's predictions $z$ satisfy
\begin{equation}
\label{eq:dem-parity}
    p(z|\phi=\phi_1) = p(z|\phi=\phi_2) \quad \forall z,\phi_1,\phi_2.
\end{equation}

\vspace{-5pt}
\section{Fairer Data Acquisition}
\label{sec:method}
\vspace{-3pt}

\looseness=-1
Imposing fairness on downstream predictors and decision makers after data collection is naturally hampered by unwanted correlations and biases in the data.
Our aim is therefore to introduce a framework that can quantify the fairness of the data itself and then explicitly target fairer data by appropriately controlling the acquisition process.

\subsection{Fair Worlds}
\label{subsec:fairness}
\vspace{-3pt}

\looseness=-1
As a motivating example, consider estimating insurance risk where we want to be fair to people's skin pigmentation, which thus forms our sensitive attribute.
There is no \emph{direct} causal link from pigmentation to risk, so in an ideal world a fair predictor would not use pigmentation when estimating risk~\citep{Mitchell2021AlgorithmicFC}.
Unfortunately, as skin pigmentation may correlate with observed outcomes in real-world settings due to societal inequities, models trained on a naively gathered dataset are unlikely to be naturally fair. Instead, they will reflect the unfairness of the data gathered from an unfair world.

\looseness=-1
Imagine, however, a hypothetical alternative world where these societal biases do not exist and risk is independent of pigmentation.
Models trained and applied in this world would be inherently fair and could safely ignore skin pigmentation when predicting risk. 
Our key insight is that we can now measure the unfairness of a dataset in terms of how much it violates this assumed independence, as this is our defining characteristic of a fair world.
That is, fair data should not allow \emph{information} to be transferred between the target variables we want to predict (here insurance risk) and the sensitive attributes (here skin pigmentation), as we believe these should be independent in \emph{any} hypothetical fair world.

\looseness=-1
We can then further look to operationalize this new notion of fairness by guiding our experimental design process to 
gather data that ensures such information transfer is avoided.
Models trained on such fairer data should then themselves naturally preserve the desired independence and thus be fairer.

\looseness=-1
Of course, we also need to make sure that the gathered data is representative of the true world within which we need to act, as without this there may be a significant generalization gap that not only undermines our predictive performance but potentially also the fairness of our predictions.
For example, if we were to only gather data from people with a particular skin pigment, our model would generalise very poorly and likely be unfair in practice for people with different skin pigments.
Here though we can note that there is not a single hypothetical fair world: any world that maintains our desired independence should be fair.
We can thus try to gather data that is representative of the fair world that is closest to the true world in its underlying relationship between covariates and the target variables we want to predict from them.
As discussed in Sec.~\ref{sub:fairbed-objective}, we achieve this by also explicitly targeting data to be as informative as possible about the predictions we want to make, noting that,
crucially, as we only control the designs $\xi$ (without intervening on true outcome mechanism $y \mid \xi$), the data we gather still inherently supports accurate real--world prediction and estimation.

\subsection{Fairer Data}

\looseness=-1
To now formulate this notion of the fairness of a dataset $\mathbf{D}$, we consider what information it 
reveals about sensitive attributes, $\phi$.
Namely, if $\mathbf{D}$ itself provides no information about $\phi$, then models trained on such data will naturally produce predictions that are independent of $\phi$ (or conditionally independent to $\phi$ given any provided inputs), assuming we do not instil any prior information about $\phi$ into the model ourselves. This is an easy requirement to satisfy by simply using downstream models that do not take in $\phi$ as an input.

\looseness=-1
Specifically, we can quantify the fairness of the data as the negative of the information gain (IG) that it provides about $\phi$, $-\mathrm{IG}_{\phi}(\mathbf{D})$ (c.f.~Sec.~\ref{subsec:BED_background}).
For experimental design problems, we do not yet know the data and must instead use the expectation of this negative IG instead.
Further, both the IG and EIG are functions of the model being used and data should not generally be considered fair simply due to deficiencies in our model.  
We therefore define our fairness objective for a design policy, $\pi_d$,
to be the negative of the maximum EIG over a set of possible models $\mathbb{M}$ of the form $p_{m}(\phi)p_m(\mathbf{D}|\phi,\pi_d)$ as follows
\begin{align}
    F_u(\pi_d) = -\underset{m \in \mathbb{M}}{\max}\; \mathbb{E}_{p_m(\mathbf{D}|\pi_d)}\left[\mathrm{IG}^m_{\phi}(\mathbf{D})\right] 
    = -\underset{m \in \mathbb{M}}{\max}\; \mathrm{EIG}^m_{\phi}(\pi_d) 
    = -\underset{m \in \mathbb{M}}{\max}\; I_m\!\big(\phi;\, \mathbf{D}\mid  \pi_d\big)
    \label{eq:fair-data-criterion}
\end{align}
where $I_m(\cdot;\cdot)$ denotes mutual information under model $m\in\mathbb{M}$.
Because the EIG is strictly nonnegative, the maximum possible fairness score under this metric is $F_u(\pi_d)=0$, which represents the data conveying no information about $\phi$ under any of the models.

\looseness=-1
We can also consider an alternative fairness metric: the negative of the \emph{conditional} information gain that the data provides about $\phi$ given $\theta$, $-\mathrm{IG}^m_{\phi|\theta}(\mathbf{D})$.
In turn this yields the fairness objective
\begin{align}
    F_c(\pi_d) &= -\underset{m \in \mathbb{M}}{\max}\; \mathbb{E}_{p(\theta)p_m(\mathbf{D}|\theta,\pi_d)}\left[\mathrm{IG}^m_{\phi|\theta}(\mathbf{D})\right] \\
    &=-\underset{m \in \mathbb{M}}{\max}\;\mathbb{E}_{p(\theta)}\left[\mathrm{EIG}_{\phi|\theta}^m(\pi_d)\right] 
    = -\underset{m \in \mathbb{M}}{\max}\; \mathbb{E}_{p(\theta)}\left[I_m\!\big(\phi;\, \mathbf{D}\mid  \theta, \pi_d\big)\right].
\end{align}
This is a stricter metric in the sense that $F_c(\pi_d)=0$ directly implies $F_u(\pi_d)=0$ whenever $\theta$ and $\phi$ are a priori independent;
it conveys the fact that we do not want the data to convey information about $\phi$ for \emph{any} value of the target $\theta$, as opposed to only marginally.
This helps to guard against explaining away effects, where the sensitive attribute might be recoverable once the target variable is established.
However, it can also be more computationally challenging to work with and more complicated to define models for (as we need a joint on $\theta,\phi$ rather than just marginal models on each).

\subsection{FairBED objective}
\label{sub:fairbed-objective}

\looseness=-1
In practice, $F_u$ and $F_c$ are not appropriate experimental design objectives in their own right, as they include no provision for the data actually being \emph{useful} for the task at hand: to learn about $\theta$ and gather data that is useful for downstream tasks.
Indeed, they can be trivially optimized by simply gathering no data at all, or deliberately selecting degenerate data with no meaningful information in it.

\looseness=-1
Viewed from our earlier alternative worlds perspective, we need the data generating process we induce, $p(\mathbf{D}|\pi_d)$, to produce data which is actually helpful for predictions. That is, the hypothetical world we created needs to be as representative as possible of the real world.
Thankfully, because we are only controlling the designs without influencing the outcomes themselves, our experimental design procedure is not influencing either the posteriors we derive from given data $p(\theta|\mathbf{D})$ or the true distribution on $y|\xi$. Thus, this need for representativeness is captured simply by ensuring the data is \emph{informative about} $\theta$.

\looseness=-1
In other words, we ensure data is both useful and fair by making it informative about $\theta$ but uninformative about $\phi$ (or for the conditional fairness metric $\phi|\theta$).
To this end, we now 
introduce two specific FairBED objectives, 
building on the fairness criteria above.
A \emph{conditional} FairBED objective, $R_c$, utilizing $F_c$, admits a theoretical DP link, and an \emph{unconditional} FairBED objective, $R_u$, utilizing $F_u$, which is often simpler in practice:
\begin{align}
    R_c(\pi_d) &= \mathrm{EIG}_{\theta}(\pi_d)-\beta\,\mathbb{E}_{p(\theta)}\!\left[ \mathrm{EIG}_{\phi \mid \theta}(\pi_d) \right],
   \label{eq:fairbed_conditional_loss}\\
    R_u(\pi_d) &= \mathrm{EIG}_{\theta}(\pi_d)-\beta\, \mathrm{EIG}_{\phi}(\pi_d),
    \label{eq:fairbed_loss}
\end{align}
where $\beta>0$ controls the strength of fairness enforcement.

\looseness=-1
Unlike $F_u$ and $F_c$, these are defined under a single model which is typically more practical when using them as experimental design objectives.
This is a joint model over $\theta$, $\phi$, and data: $p(\theta)p(\phi|\theta)p(\mathbf{D}|\theta,\phi,\pi_d)$, but for $R_u$ the objective only depends on the marginals $p(\theta)p(\mathbf{D}|\theta,\pi_d)$ and $p(\phi)p(\mathbf{D}|\phi,\pi_d)$, so these can be defined completely independently if desired (leaving the full joint implicit).
We can account for multiple models on $\phi$ by defining a model class of the form $p(\theta)p(\mathbf{D}|\theta,\pi_d)p_m(\phi|\theta,\mathbf{D})$, so that $\mathrm{EIG}_\theta$ remains fixed but we can still choose the worst case model on $\phi$; this leads to the second terms in the equations being replaced by $-\beta F_c$ and $-\beta F_u$ respectively.

\looseness=-1
Despite their simplicity, these FairBED objectives provide a powerful basis for informative data gathering while limiting information about sensitive attributes to remain fair.
It is important to appreciate, perhaps counterintuitively, that the informativeness and fairness terms in the FairBED objectives do not always trade-off: gathering data that is informative about $\theta$ can be necessary to achieve fairness in the real world.
Without this, there could be a significant generalization gap between the collected data and the downstream predictions and decisions.
Gathering data that is truly informative thus protects against this generalization gap itself varying with sensitive attributes, and thus unfairness manifesting indirectly.
For instance, if we collect data only from men, predictions for women are unlikely to be fair because the model cannot generalize.
By contrast, if we collect data that makes gender hard to predict while still covering the cases relevant to downstream use, fair predictions are more likely.

\vspace{-4pt}
\section{FairBED and Demographic Parity}
\vspace{-4pt}
\label{sec:FairBED_proposition}

\looseness=-1
We now show, under mild assumptions, 
that achieving $F_c(\pi_d)=0$ formally links to achieving \emph{demographic parity} on downstream models derived from any of the resulting conditional posteriors
$p_m(\theta|\mathbf{D},\phi)$ for models in $\mathbb{M}$.
As discussed in Sec.~\ref{subsec:fairness_in_ml}, demographic parity requires target predictions to be the same when we condition only on the sensitive attributes $\phi$ (noting that this does not necessarily mean that predictions are still invariant to $\phi$ when conditioned on additional covariates or inputs that are themselves dependent on $\phi$). 
Let $z$ denote the downstream variable that we wish to predict.
Given some class of allowed joint models 
$p(\theta)p(\mathbf{D}|\theta,\pi_d)p_m(\phi|\theta,\mathbf{D})$, 
then under a Bayesian framework, downstream predictive distributions will take the form
\begin{align}
    p_m(z|\phi,\mathbf{D})=\int p(z|\phi,\theta)p_m(\theta|\mathbf{D},\phi)d\theta
\end{align}
where $p(z|\phi,\theta)$ is a fixed predictive model parameterised by $\theta$ and taking $\phi$ as input. All $\mathbf{D}$ conditioning flows through $\theta$.
Following~\eqref{eq:dem-parity}, demographic parity for dataset $\mathbf{D}$ now requires that $p_m(z|\phi,\mathbf{D})$ is invariant with $\phi$, that is $\forall z,\phi_1,\phi_2:$
\begin{align}
    p_m(z|\phi=\phi_1,\mathbf{D}) \!=\! p_m(z|\phi=\phi_2,\mathbf{D}).
    \label{eq:dem-parity-pred}
\end{align}
A sufficient condition for~\eqref{eq:dem-parity-pred} to hold is that \emph{both}
(i) the $\theta$–posterior and (ii) the downstream predictive model are
$\phi$-invariant. 
Here (ii) is addressed in-part by a normative choice to not take $\phi$ as an input to our downstream model, that is we use a downstream predictor of the form $p(z|\theta)$. 
As we now show, (i) follows directly whenever $F_c(\pi_d)=0$, provided our allowed priors $p_m(\theta,\phi)=p(\theta)p_m(\phi)$ are separable, such that $\theta$ and $\phi$ are independent.
This is a natural assumption as the basis for our fairness definition itself. Violating it would mean we are unfair even before seeing any data.

\begin{proposition}[]
\label{prop:posterior_pred_dp_conditional}
If all priors under our assumed model class are factorizable such that $p_m(\theta,\phi)=p(\theta)p_m(\phi) \forall m \in \mathbb{M}$, then any downstream predictor of the form $p(z|\phi,\mathbf{D})=\int p(z|\theta)p_m(\theta|\mathbf{D},\phi) d\theta$ achieves demographic parity for all $\mathbf{D}$ whenever data is acquired under a policy that achieves $F_c(\pi_d)=0$.
\end{proposition}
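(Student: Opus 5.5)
The approach is to show that $F_c(\pi_d)=0$ forces the conditional posterior $p_m(\theta\mid\mathbf{D},\phi)$ to equal the marginal posterior $p(\theta\mid\mathbf{D})$. This makes it invariant in $\phi$, and since the downstream predictor $p(z\mid\theta)$ does not take $\phi$ as input, the predictive $p(z\mid\phi,\mathbf{D})$ is then also invariant in $\phi$, which is exactly \eqref{eq:dem-parity-pred}.

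\textbf{Step 1: zero conditional information.} Since each term $\mathbb{E}_{p(\theta)}[I_m(\phi;\mathbf{D}\mid\theta,\pi_d)]$ is nonnegative, $F_c(\pi_d)=0$ implies that this term vanishes for every $m\in\mathbb{M}$. Hence $I_m(\phi;\mathbf{D}\mid\theta,\pi_d)=0$ for $p(\theta)$-almost every $\theta$. Equivalently, $\phi$ and $\mathbf{D}$ are conditionally independent given $\theta$ under each model $m$:
\[
p_m(\mathbf{D}\mid\theta,\phi,\pi_d)=p_m(\mathbf{D}\mid\theta,\pi_d).
\]
Within the model class this conditional is the shared likelihood $p(\mathbf{D}\mid\theta,\pi_d)$.

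\textbf{Step 2: posterior invariance.} By Bayes' rule and the factorized prior $p_m(\theta,\phi)=p(\theta)p_m(\phi)$,
\[
p_m(\theta\mid\mathbf{D},\phi)\propto p_m(\phi)\,p(\theta)\,p_m(\mathbf{D}\mid\theta,\phi,\pi_d)=p_m(\phi)\,p(\theta)\,p(\mathbf{D}\mid\theta,\pi_d).
\]
As a function of $\theta$, the factor $p_m(\phi)$ is constant and cancels on normalization. This gives $p_m(\theta\mid\mathbf{D},\phi)=p(\theta\mid\mathbf{D})$, which does not depend on $\phi$ or on $m$.

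\textbf{Step 3: conclude.} Substituting into the predictor,
\[
p(z\mid\phi,\mathbf{D})=\int p(z\mid\theta)\,p(\theta\mid\mathbf{D})\,d\theta
\]
for every $\phi$, so \eqref{eq:dem-parity-pred} holds for every pair $\phi_1,\phi_2$ and every $z$.

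\textbf{Main obstacle.} The main difficulty is the measure-theoretic passage from zero mutual information, which only holds almost surely, to the claim ``for all $\mathbf{D}$.'' Zero conditional mutual information yields conditional independence only almost everywhere under $p(\theta)p_m(\phi)p_m(\mathbf{D}\mid\theta,\phi)$, so the natural conclusion is parity for $p_m(\mathbf{D}\mid\pi_d)$-almost every $\mathbf{D}$. I would handle this by stating this interpretation explicitly. If needed, I would add a regularity assumption, such as continuous densities with full support, to upgrade the almost-sure statement to all $\mathbf{D}$. Data sets outside the support have undefined posteriors in any case, so nothing is lost there.
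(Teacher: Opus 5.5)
Your proof is correct and follows the paper's argument essentially step for step. Zero conditional mutual information gives $p_m(\mathbf{D}\mid\theta,\phi,\pi_d)=p(\mathbf{D}\mid\theta,\pi_d)$, Bayes' rule with the factorized prior then collapses $p_m(\theta\mid\mathbf{D},\phi)$ to $p(\theta\mid\mathbf{D})$, and the $\phi$-free predictor inherits this invariance. The differences are cosmetic: you let normalization absorb the evidence, whereas the paper explicitly shows $p_m(\mathbf{D}\mid\phi,\pi_d)=p(\mathbf{D}\mid\pi_d)$. Your almost-sure caveat is also a genuine tightening of the paper's passage from ``almost surely'' to ``$\forall\theta$'' and ``for all $\mathbf{D}$'', which the paper makes without justification.
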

\vspace{-5pt}
The proof is provided in Appendix~\ref{app:proofs}.
It should be noted that this result does not necessarily directly extend to the case where our downstream predictor also takes in some additional covariates that are themselves dependent on $\phi$, as these can then provide a separate source of unfairness to either our data or model.
Discussion of this setting is provided in Appendix~\ref{sec:exploring-predictor}.

\vspace{-4pt}
\section{Active Learning}
\label{sec:active_learning}
\vspace{-4pt}
\looseness=-1
Active learning (AL) is an important special case of experimental design in which we sequentially select inputs, $x$, to gather labels for, $y$, in order to learn a supervised predictive model for $y|x$ as efficiently as possible~\citep{mackay1992information,settlestr09}. 
FairBED can be applied to active learning problems whenever the model is probabilistic (whether it is updated in a Bayesian manner or not), such that it can be written in the form
\begin{equation}
    p(y \mid x)
    = \mathbb{E}_{p(\upsilon)}\!\left[p(y \mid x, \upsilon)\right],
    \label{eq:al_underlying_stochastic}
\end{equation}
with $\upsilon$ denoting latent stochastic model variables (e.g., weights, ensemble index, or other randomness).

\looseness=-1
Various acquisition strategies for choosing $x$ have been proposed in AL literature \citep{BickfordSmith2024MakingBU, llm-AL-survey, BADGE}.
Of most relevance to our work is the \emph{expected predictive information gain} (EPIG) acquisition function~\citep{pmlr-v206-bickfordsmith23a}, which targets the EIG in future predictions $y_*$ under an assumed test-time input distribution $p_*(x_*)$:
\begin{equation}
 \mathrm{EPIG}(x) = \mathbb{E}_{p_*(x_*)}\left[I(y;y_*|x,x_*)\right] = I(y;(x_*,y_*)|x).
  \label{eq:epig}
\end{equation}
\looseness=-1
It thus directly targets labels that are most informative about future predictions,
meaning \(y_* \mid x_*\) rather than \(x_*\) itself. However, as \(x_*\) is itself stochastic,
EPIG is formally the EIG of the pair \((x_*, y_*)\), not of \(y_*\) alone.
Unlike BALD~\citep{houlsby2011bayesian}, EPIG thus directly prioritizes queries that are expected to yield the largest predictive gain on the test distribution itself.

\subsection{FairBED for Active Learning}
\label{sub:fairbed_in_al}

Adapting FairBED to Active Learning (AL), we seek to acquire a labeled dataset that (i) improves predictions for \emph{target} labels $y_{*\theta}$ for test time inputs $x_*\sim p_*(\text{x}_*)$, while (ii) limiting prediction of a \emph{sensitive} test-time label $y_{*\phi}$. We operationalize ``leakage'' via predictability: if an additional query improves predictions of $y_{*\phi}$ on $p_*$, then it increases the sensitive information extractable from the released dataset. We want the acquired dataset to carry little information about the sensitive label.

As EPIG is just a particular EIG, namely the EIG in $(x_*,y_*)$, it is directly amenable to our FairBED objectives, $R_c$ and $R_u$.
Specifically, we can set $\theta=(x_*,y_{*\theta})$ and $\phi=(x_*',y_{*\phi})$, where $x_*$ and $x_*'$ are independent samples from our test time distribution, with $y_{*\theta}$ the $y_{\theta}$ label associated with $x_*$ and $y_{*\phi}$ the $y_{\phi}$ label associated with $x_*'$.
We opt to focus on the \emph{unconditional} form of FairBED, $R_u$, for the AL setting, yielding
\begin{align}
    R_{\text{FairEPIG}}&(x) = I(y_{\theta};(x_*,y_{*\theta})|x)-\beta I(y_{\phi};(x_*',y_{*\phi})|x)
    \label{eq:fairbed_AL}
\end{align}
where $\beta$ controls the trade-off between selecting inputs whose labels will be informative for future prediction of $y_{*\theta}$ but uninformative for future prediction of $y_{*\phi}$.

\looseness=-1
Focusing on the unconditional FairBED form for AL is pragmatic: it is simpler to estimate since it only depends on the marginal predictive distributions for the target and sensitive tasks. The objective is thus invariant to the coupling of the two predictors: \emph{any} joint distribution over $(y_\theta, y_\phi|x)$ that has the same marginals $y_\theta|x$ and $y_\phi|x$
yields the same acquisition score.  
Thus, we can train completely separate models on the corresponding histories $h_t^{\theta} = \{x^i,y_{\theta}^i\}_{i=1:t}$ and $h_t^{\phi} = \{x^i,y_{\phi}^i\}_{i=1:t}$,
retraining these as new data becomes available in the standard AL manner.
This decoupling allows each model to be as predictive as possible for its own label; in particular, a strong predictive model on $\phi$ acts as an effective adversary for detecting sensitive information.

\section{Related Work}
\label{sec:rel_works}

\textbf{Notions of fairness.}~~
Fairness criteria split broadly into \emph{group fairness} (demographic parity, equalised odds; \citep{hardt2016equality, zafar2017fairness}) and \emph{individual} or \emph{metric fairness}, which requires similar individuals to receive similar outcomes under a task-specific similarity metric \citep{dwork2011fairnessawareness, shen2022metric, Mitchell2021AlgorithmicFC}. 
While our focus has been on group fairness, FairBED does contain some interesting links to individual fairness, particularly its conditional objective: by ensuring data does not contain information about $\phi$, it also leads to individual predictions that are independent of $\phi$ (Eq.~\ref{eq:cond_dp} in the Appendix).
Future work might thus look to explore whether FairBED can also be helpful for individual fairness applications.

\textbf{Fairness mitigations in ML.}~~
Most mitigations act after data collection: \emph{pre-processing} reweights, relabels, or transforms inputs \citep{dwork2018decoupled, removing-disparate-impact, krasanakis2018adaptive, abernethy2020active, shekhar2021adaptive}; \emph{representational fairness} learns latent features predictive of the target but uninformative about the sensitive attribute \citep{fair-representations, Locatello2019OnTF}; \emph{in-processing} adds fairness constraints at training time \citep{hardt2016equality, agarwal2018reductions, zafar2017fairness, abraham2019fairness, Just-train-twice, martinez2020minimax}; and \emph{post-processing} adjusts outputs after training \citep{cruz2023unprocessing, kamiran2012decision}. FairBED is complementary to all above approaches as they can be run alongside using FairBED in the original data acquisition if desired.
\citet{Yang2024TheLO} further show that fairness gains from post-hoc interventions can often fail to generalise in real-world deployment, strengthening the case for upstream interventions like FairBED.

\textbf{Active learning for fairness.}~~
\looseness=-1
Prior AL approaches have targeted individual/metric fairness \citep{shen2022metric}, bandit formulations \citep{tae2024falconfairactivelearning}, group-fairness during querying \citep{Fair-active-learning, FAL-CUR, pang2024fairness, sharaf2022promoting}, group-level sampling \citep{cai2022adaptive}, and parity-constrained training with active selection \citep{sharaf2022promoting}. Unlike FairBED, these require sensitive labels \citep{Fair-active-learning, FAL-CUR} or a labelled validation set \citep{pang2024fairness, sharaf2022promoting}. 
Moreover, they do not introduce their own notions for fairness of the \emph{data} itself, as opposed to actively training the model to be fair under existing fairness metrics.
For example,~\citet{Fair-active-learning} use a nested model retraining per candidate--label pair to directly evaluate the fairness achieved by the downstream model, making it orders of magnitude more expensive than FairBED in the general setting (though they do provide a more efficient covariance-based approximation for the specific case of generalized linear models). By contrast, FairBED formalises and then targets dataset-level fairness by explicitly limiting \emph{sensitive-attribute information leakage} during acquisition. It does not require sensitive labels at query time, only marginal models for the target and sensitive attributes which can be pretrained or handcrafted models when labels or sensitive attributes are unavailable, and can apply beyond AL to the general BED setting.
\vspace{-5pt}
\section{Experiments}
\label{sec:experiments}
\vspace{-5pt}

\subsection{Source Location Finding}
\label{subsec:source_loc_find}
\begin{figure}[!t]
    \centering

    \begin{minipage}[t]{0.48\linewidth}
        \vspace{0pt}
        \centering
        \includegraphics[width=\linewidth]{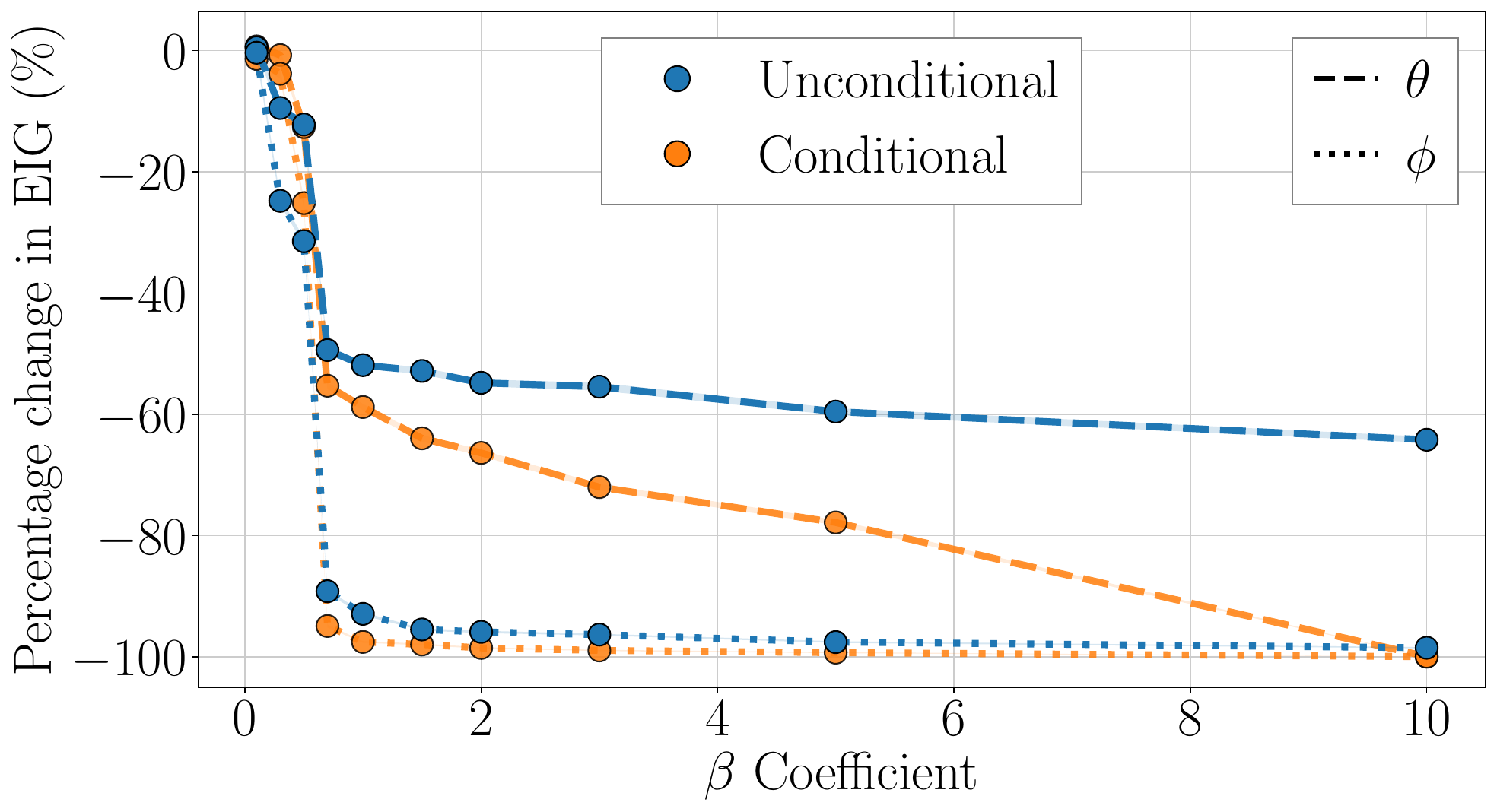}
        \captionof{figure}{\textbf{Conditional vs.\ unconditional FairBED across $\beta$ (static; $T{=}10$).}
        Lower-bound EIG. Error bars $\pm$ 1 std err (8 seeds).}
        \label{fig:loc_find_cond_vs_uncond}
    \end{minipage}
    \hfill
    \begin{minipage}[t]{0.48\linewidth}
        \vspace{0pt}
        \centering
        \captionof{table}{\textbf{Location Finding} ($T{=}10$, $\beta{=}0.8$).
        Ratio of lower-bound EIG estimates. Ratio of $0$ is optimal.
        FairBED uses conditional objective~\eqref{eq:fairbed_conditional_loss}.}
        \label{tab:loc_find_ratio_only}
        \vspace{2pt}

        \small
        \setlength{\tabcolsep}{3pt}
        \begin{tabular}{@{}l c c@{}}
            \toprule
            Method & $\mathrm{EIG}_\phi / \mathrm{EIG}_\theta$ & $\mathrm{EIG}_{\phi| \theta} / \mathrm{EIG}_\theta$ \\
            \midrule
            Random & $1.009 \pm 0.034$ & 1.402 $\pm$ 0.044 \\
            BED (Static) & $0.981 \pm 0.007$ & 1.050 $\pm$ 0.007 \\
            BED (DAD) & $0.898 \pm 0.007$ & 0.960 $\pm$ 0.007 \\
            \textbf{FairBED (Static)}~\eqref{eq:fairbed_conditional_loss} & $0.077 \pm 0.004$ & 0.290 $\pm$ 0.009 \\
            \textbf{FairBED (DAD)}~\eqref{eq:fairbed_conditional_loss} & $0.378 \pm 0.007$ & 0.507 $\pm$ 0.009 \\
            \bottomrule
        \end{tabular}
    \end{minipage}

    \vspace{-10pt}
\end{figure}
\begin{wrapfigure}{r}{0.4\textwidth}
    \centering
    \vspace{-3em}
    \includegraphics[width=\linewidth]{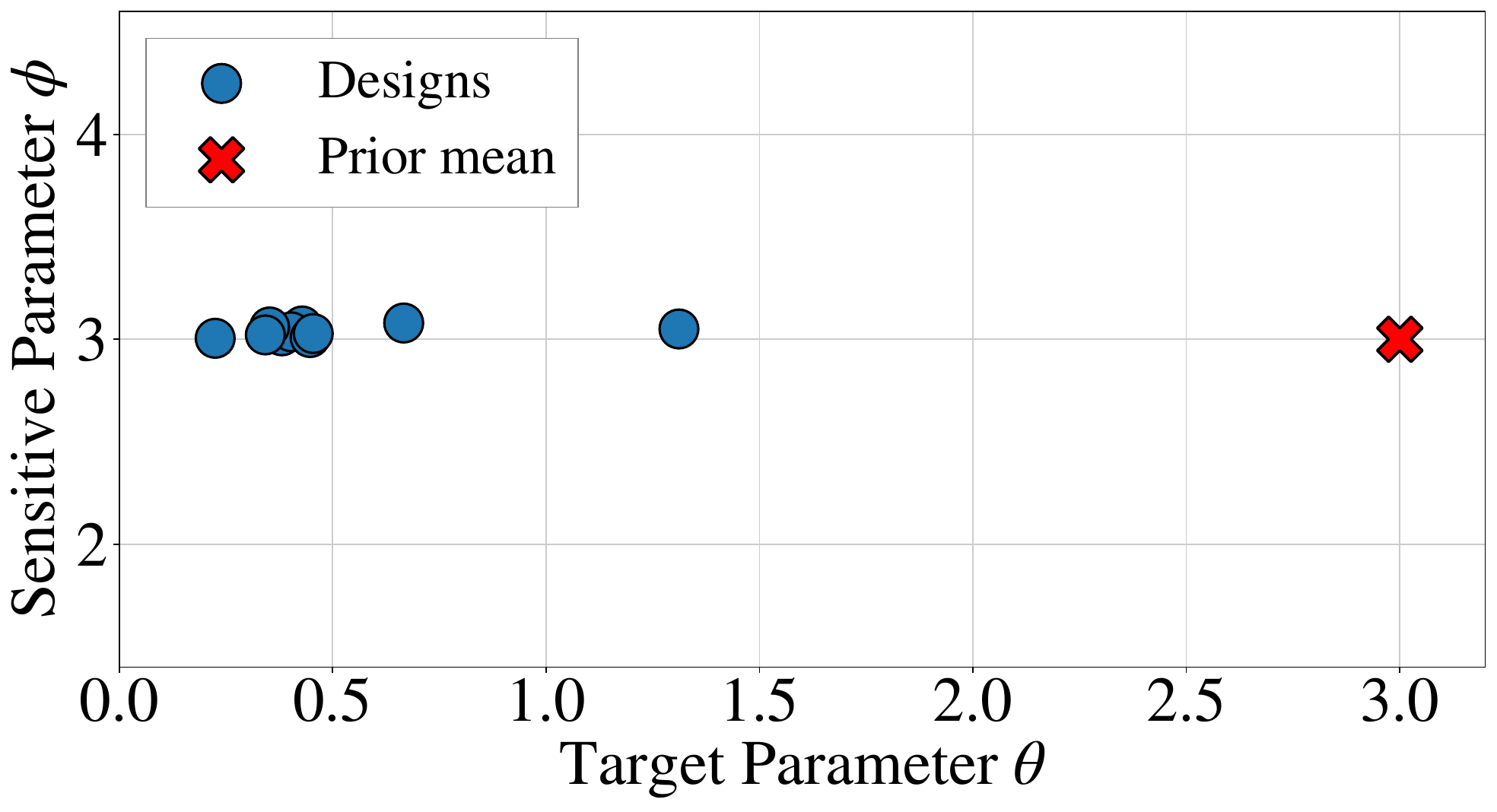}
    \caption{\textbf{Location Finding Designs (Static, $T{=}10$, $\beta{=}0.7$).} Source prior mean is at $(3,3)$.}
    \label{fig:static designs}
    \vspace{-0.5em}
\end{wrapfigure}
We consider the source location finding BED benchmark \citep{foster2021dad} in a two-dimensional setting, wherein we infer a hidden source location $\psi \in \mathbb{R}^p$ from signal strength observations $y$. Unlike the standard BED setup, we only want to learn about one coordinate of the source location, such
that $\psi = (\theta, \phi)$, with $\theta$ the target coordinate and $\phi$ sensitive. 
We optimize designs using FairBED~\eqref{eq:fairbed_conditional_loss}, encouraging learning in $\theta$ and discouraging learning in $\phi$. 
Appendix~\ref{app:location_finding} provides experimental details.
FairBED-optimized design policies are parameterized as \textbf{static} designs \citep{foster2020unified} and policy-based \textbf{DAD} \citep{foster2021dad}.

In Fig.~\ref{fig:loc_find_cond_vs_uncond} we show the change in information gathered about $\theta$ and $\phi$, respectively as $\beta$ is increased, when using the unconditional and conditional forms of the FairBED objectives to make design decisions (here we optimize all designs upfront, but consider instead using DAD~\citep{foster2021dad} and other ablations in Appendix~\ref{app:location_finding}).
The desired outcome is a smaller reduction in $\mathrm{EIG}_\theta$ than the reduction in $\mathrm{EIG}_\phi$ (which is the negative expected value of our marginal data fairness metric $-\mathrm{IG}_{\phi}(\mathbf{D})$), indicating that the data we have gathered is more informative about $\theta$ than $\phi$.
We see that both the unconditional and conditional FairBED variants behave similarly for small $\beta$, but the conditional variant more strongly suppresses learning in both $\theta$ and $\phi$ for larger $\beta$.  

In Table~\ref{tab:loc_find_ratio_only} we further compare FairBED against \textbf{Random} designs and two variants of the standard BED approach of maximising (a sPCE lower bound on) $\boldsymbol{\mathrm{EIG}_{\theta}}$.
We see that it provides a far better ratio of information learned in the two quantities than the baselines.
Interestingly, static design also achieves a better ratio than DAD (though this is because it learns less about both $\theta$ and $\phi$).

The designs chosen by FairBED have an intuitive structure for this problem (Fig.~\ref{fig:static designs}): they spread along the target coordinate to inform $\theta$ and collapse to the prior mean of $\phi$.
This is desirable behaviour as the lack of variation in $\phi$ hampers learning it.

\begin{figure}[!t]
    \centering
    \begin{minipage}[t]{0.48\linewidth}
        \centering
            \includegraphics[width=0.9\linewidth]{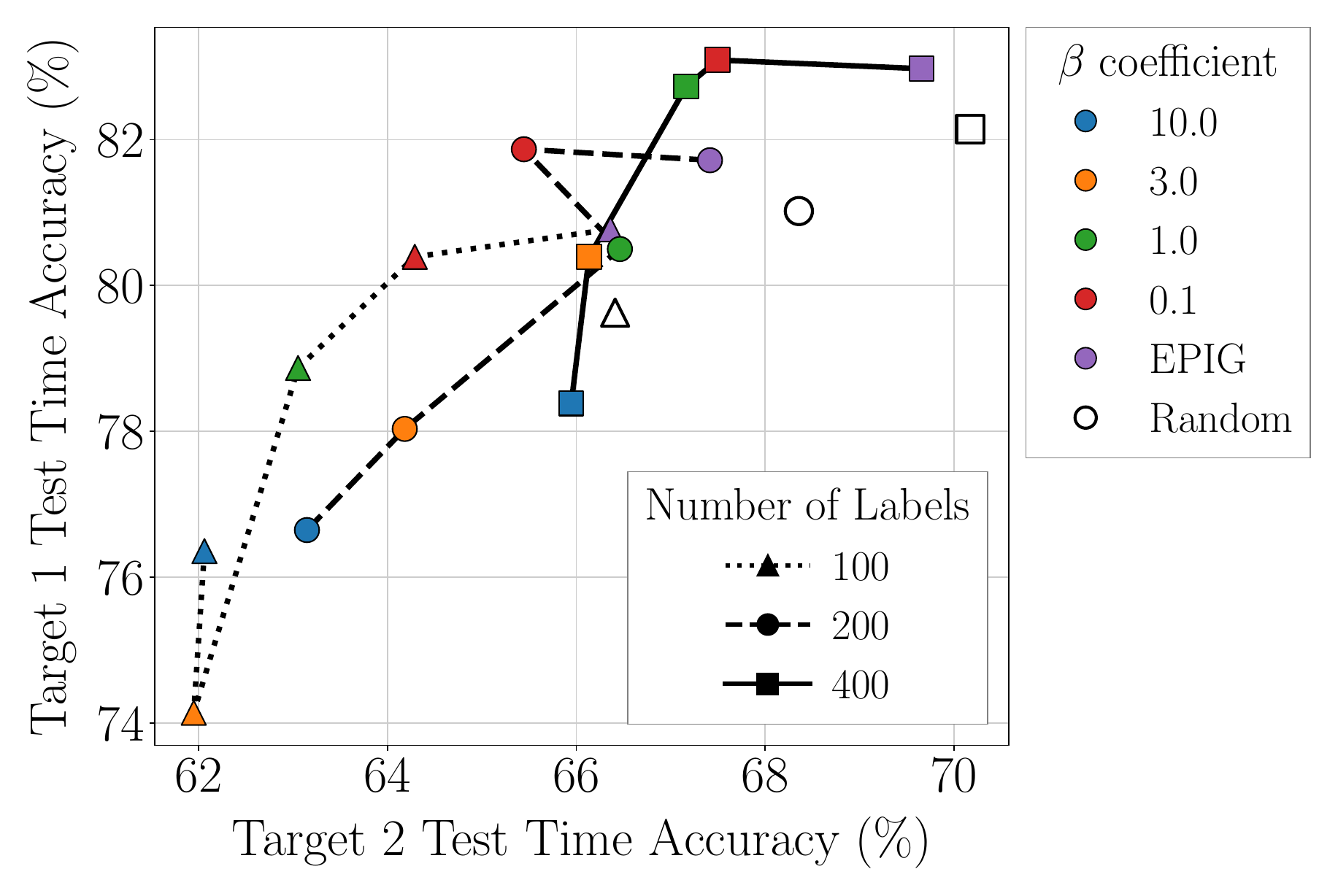}
    \captionof{figure}{Trade-off in target (y axis) vs sensitive attribute (x axis) prediction accuracy for different $\beta$ values and acquisition sizes (marker shapes) on Student Graduation dataset. Points towards top left preferred.}
    \label{fig:graduate-pareto}
    \end{minipage}
    \hfill
    \begin{minipage}[t]{0.48\linewidth}
        \centering
        \includegraphics[width=0.9\linewidth]{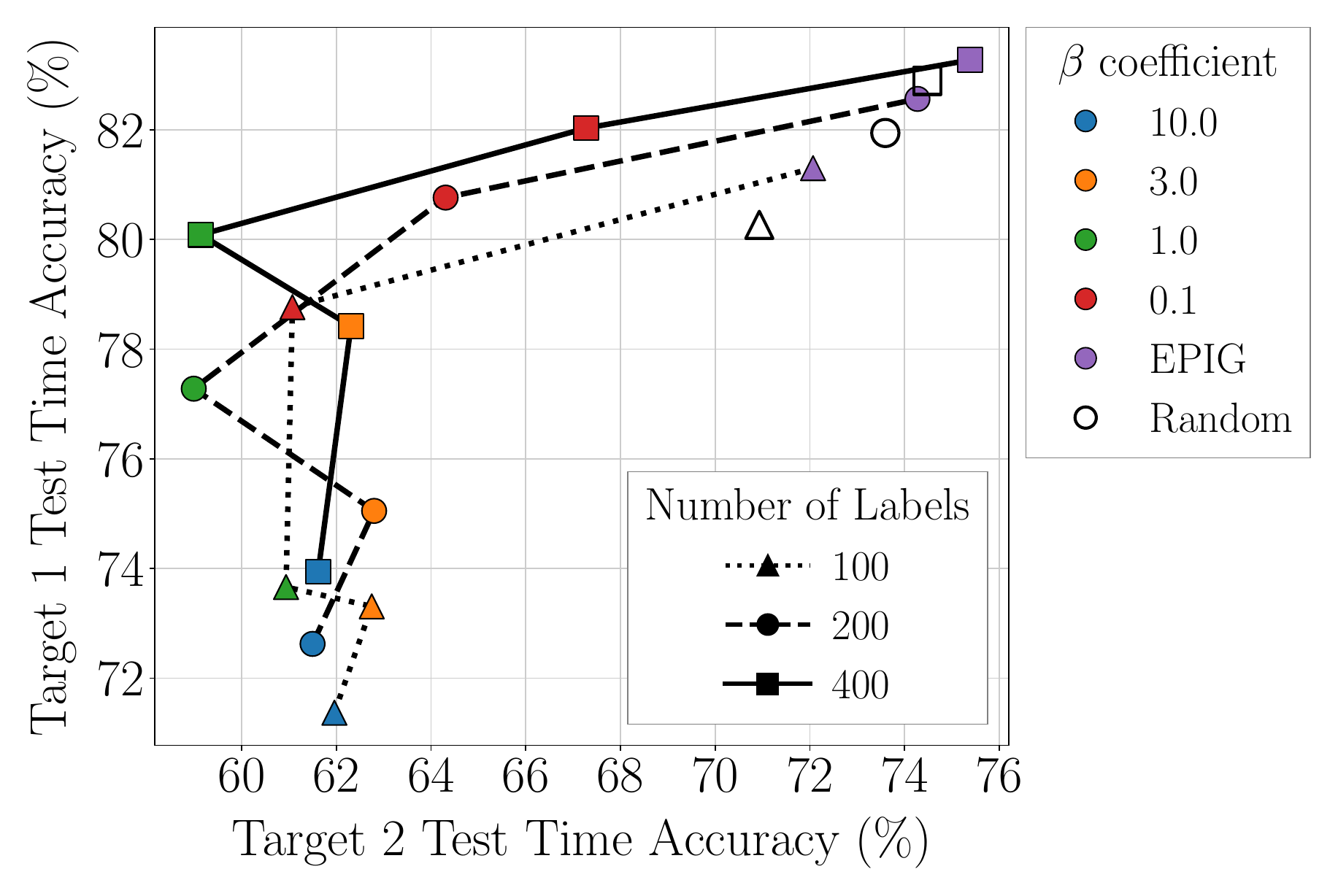}
        \captionof{figure}{Trade-off in target (y axis) vs sensitive attribute (x axis) prediction accuracy for different $\beta$ values and acquisition sizes (marker shapes) on Census dataset. Points towards top left preferred.}
        \label{fig:census-pareto}
    \end{minipage}
    \vspace{-10pt}
\end{figure}
\subsection{Active Learning Evaluations}
\label{sub:student_graduation}
We next evaluate FairBED in an active learning setting. 
Here we first consider the binary classification task dataset: Predict Students’ Dropout and Academic Success \citep{predict_students_dropout_and_academic_success_697}, where we adapt the task such that $y_{\theta}$ indicates whether an individual has graduated and the sensitive label $y_{\phi}$ corresponds to \textit{gender (male/female)}.
Second, we consider the Census Income  dataset \citep{census-income_(kdd)_117},
\begin{wrapfigure}{r}{0.5\textwidth}
    \centering
\vspace{-10pt}
        \includegraphics[width=\linewidth]{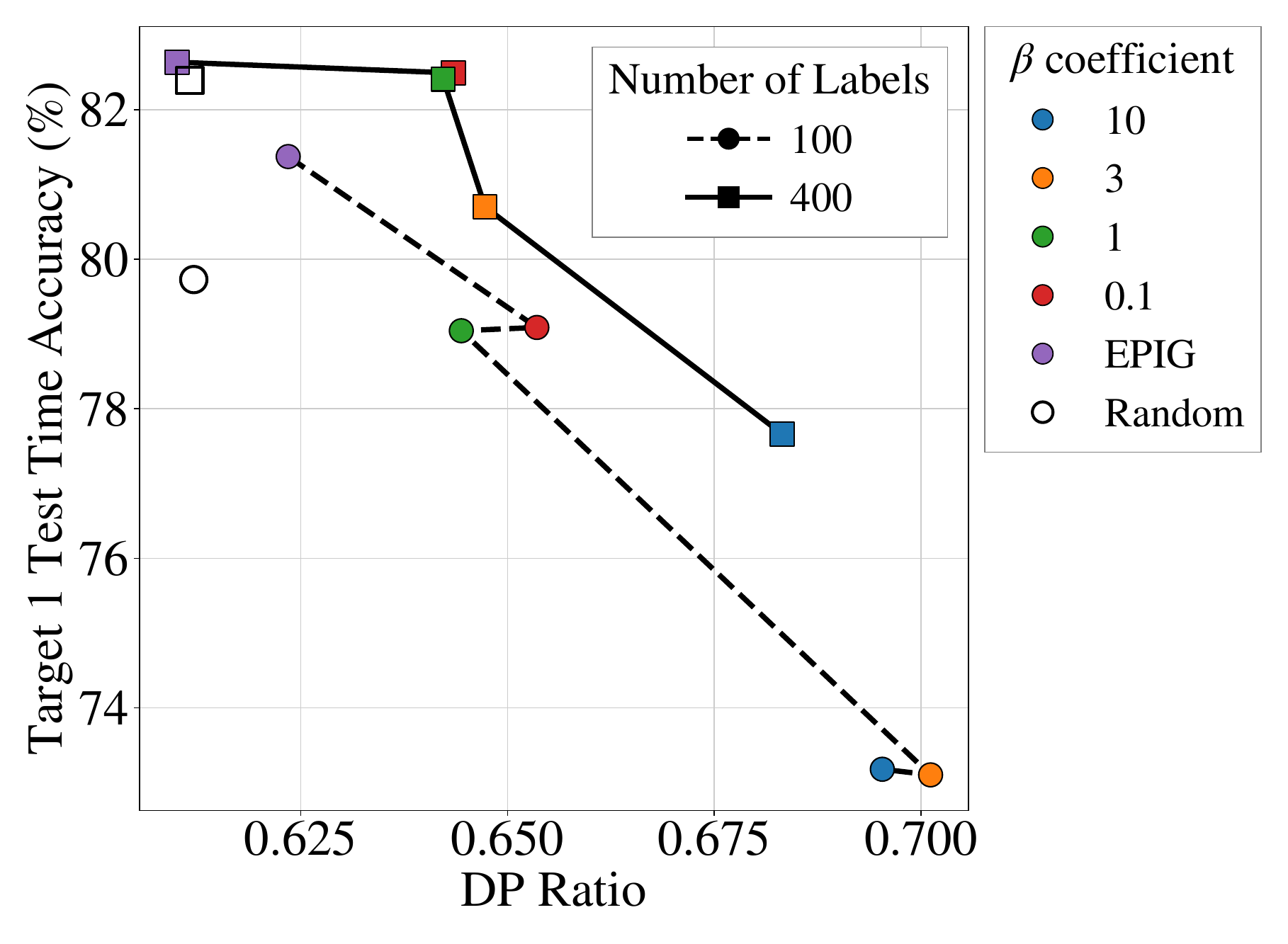}
    \caption{Accuracy vs DP ratio for student graduation dataset for varying $\beta$ and random acquisition.  Points towards top right preferred.}
    \label{fig:graduate-dp-acc}
    \vspace{-1.5em}
\end{wrapfigure}
 with target $y_{\theta}$ indicating whether income is \textit{below or above \$50k} and sensitive label $y_{\phi}$ corresponding to \textit{gender (male/female)}. 
Finally, we consider CelebA \citep{Liu2014DeepLF}, a large-scale, non-tabular image dataset, to test whether the acquisition strategy extends beyond structured records. Here we consider the task of predicting if an individual is \textit{smiling} with \textit{gender} as the sensitive attribute. 

\paragraph{Models.} We first model the predictive distribution with a random forest ensemble, interpreting each tree as a sample from an implicit posterior over parameters $\upsilon$. This provides uncertainty estimates and enables computation of information gain without explicit posterior inference. 
We use a label budget of $400$ and average over $8$ random seeds.
\paragraph{Baselines.} We compare against \textbf{Random} acquisition, the standard acquisition functions \textbf{EPIG}~\citep{pmlr-v206-bickfordsmith23a} and predictive entropy that only target uncertainty in $\theta$, and an ablation of our FairBED setup based on replacing the EIGs with predictive entropies, that is $R_{\mathrm{HFE}}(x)
= \mathbb{H}\!\left[p(y_\theta \mid x,h_{t-1})\right]
- \beta\,\mathbb{H}\!\left[p(y_\phi \mid x,h_{t-1})\right]$, that we call~\textbf{Heuristic Fair Entropy (HFE)}.

\paragraph{Metrics.}
We report test-time predictive accuracy on targets, demographic parity (DP) ratios $p(y_{*\theta}=1 | y_{*\phi}=1) /p(y_{*\theta}=1 | y_{*\phi}=0)$, and equalised odds ratios (App.~\ref{app:fairness definitions}).

Table~\ref{tab:combined_dp_eo_ratios} reports DP and EO ratios for FairBED and the baselines. Results show that data gathered with FairBED yields improved downstream DP in all cases and improved EO everywhere except for false positive rate on CelebA.

To check that we can achieve improved fairness-accuracy trade-offs rather than merely yielding naturally fairer predictors, Figs.~\ref{fig:graduate-pareto} and~\ref{fig:census-pareto} show the pareto fronts achieved by different choices of $\beta$ in the predictive accuracy of the learned models on $\theta$ (Target 1) and $\phi$ (Target 2) for the student graduation and census datasets, respectively.
\begin{table}[t!]
    \caption{DP, EO--True Positive Rate, and EO--False Positive Rate ratios across \textbf{Census}, \textbf{Student Graduation}, and \textbf{CelebA}. FairBED \& HFE at $\beta = 10$. A ratio of 1 is optimal.}
    \label{tab:combined_dp_eo_ratios}
    \centering
    \footnotesize
    \setlength{\tabcolsep}{3pt}
    \resizebox{\textwidth}{!}{%
    \begin{tabular}{l ccc ccc ccc}
        \toprule
        & \multicolumn{3}{c}{\textbf{Census}} & \multicolumn{3}{c}{\textbf{Student Graduation}} & \multicolumn{3}{c}{\textbf{CelebA}} \\
        \cmidrule(lr){2-4} \cmidrule(lr){5-7} \cmidrule(lr){8-10}
        Method & DP & EO-TPR & EO-FPR & DP & EO-TPR & EO-FPR & DP & EO-TPR & EO-FPR \\
        \midrule
        Random             & 0.54\,$\pm$\,0.01 & 0.88\,$\pm$\,0.01 & 0.72\,$\pm$\,0.04 & 0.61\,$\pm$\,0.01 & 0.91\,$\pm$\,0.01 & 0.58\,$\pm$\,0.02 & 0.65\,$\pm$\,0.09 & 0.70\,$\pm$\,0.08 & 0.71\,$\pm$\,0.10 \\
        Predictive Entropy & 0.49\,$\pm$\,0.01 & 0.86\,$\pm$\,0.01 & 0.60\,$\pm$\,0.02 & 0.63\,$\pm$\,0.01 & 0.90\,$\pm$\,0.02 & 0.60\,$\pm$\,0.02 & 0.77\,$\pm$\,0.08 & 0.80\,$\pm$\,0.07 & 0.82\,$\pm$\,0.08 \\
        EPIG               & 0.49\,$\pm$\,0.01 & 0.86\,$\pm$\,0.01 & 0.64\,$\pm$\,0.02 & 0.62\,$\pm$\,0.01 & 0.91\,$\pm$\,0.01 & 0.60\,$\pm$\,0.02 & 0.80\,$\pm$\,0.02 & 0.84\,$\pm$\,0.02 & \textbf{0.89\,$\pm$\,0.05} \\
        HFE                & 0.56\,$\pm$\,0.05 & 0.92\,$\pm$\,0.02 & \textbf{0.86\,$\pm$\,0.04} & 0.65\,$\pm$\,0.02 & \textbf{0.92\,$\pm$\,0.01} & 0.67\,$\pm$\,0.02 & 0.82\,$\pm$\,0.07 & \textbf{0.90\,$\pm$\,0.04} & 0.61\,$\pm$\,0.03 \\
        \textbf{FairBED-EPIG} & \textbf{0.68\,$\pm$\,0.06} & \textbf{0.93\,$\pm$\,0.02} & \textbf{0.86\,$\pm$\,0.03} & \textbf{0.70\,$\pm$\,0.02} & \textbf{0.92\,$\pm$\,0.01} & \textbf{0.69\,$\pm$\,0.05} & \textbf{0.83\,$\pm$\,0.03} & \textbf{0.90\,$\pm$\,0.06} & 0.77\,$\pm$\,0.09 \\
        \bottomrule
    \end{tabular}%
    }
\end{table}
\begin{figure}[t]
    \centering

    \begin{minipage}[t]{0.48\linewidth}
        \centering
        \includegraphics[width=0.9\linewidth]{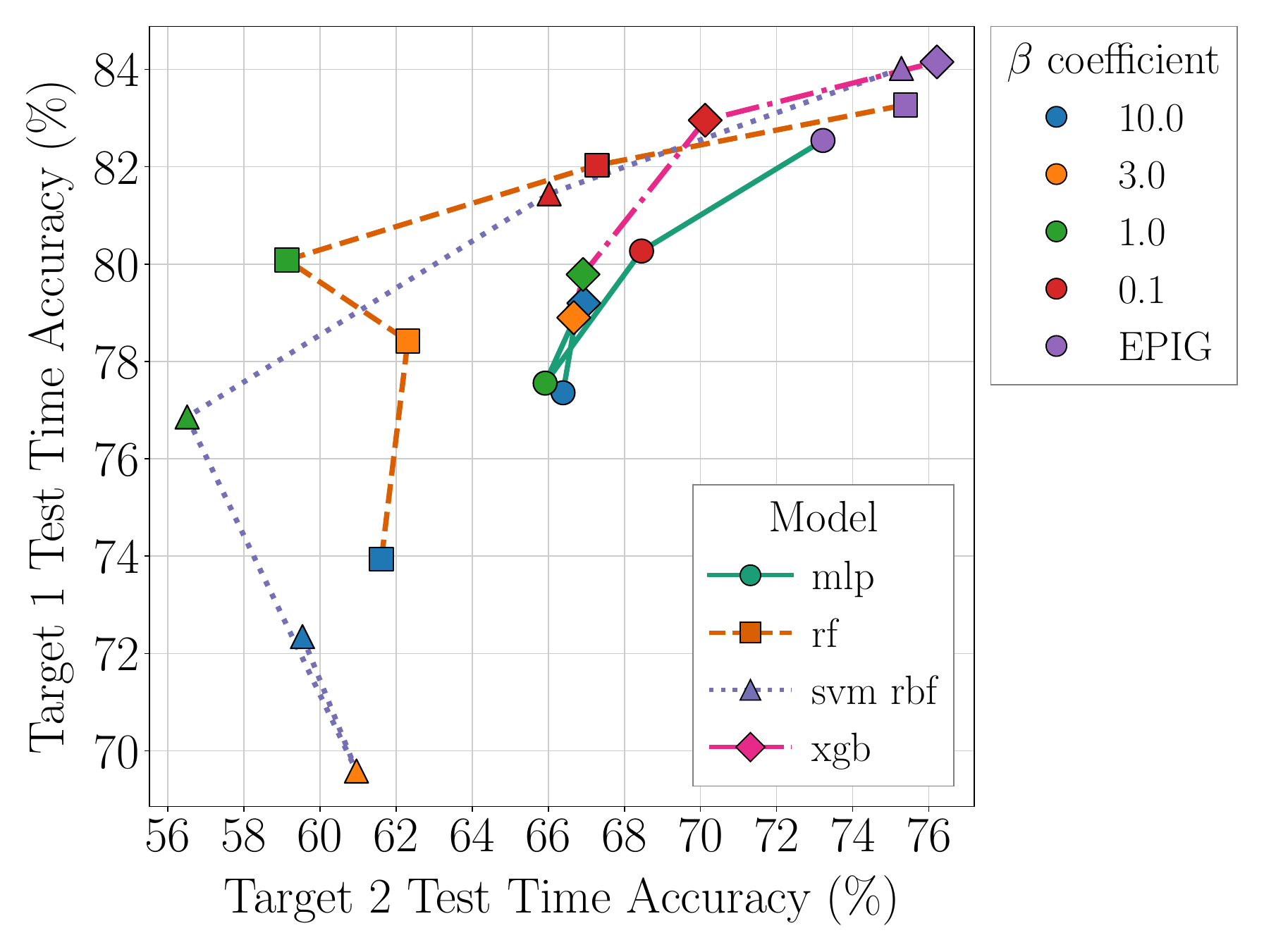}
        \captionof{figure}{Generalization across downstream models on Census Dataset with RF used during acquisition. Results are averaged over 8 seeds (400 acquired labels).}
    \label{fig:census-pareto-generalization}
    \end{minipage}
    \hfill
    \begin{minipage}[t]{0.48\linewidth}
        \centering
    \includegraphics[width=0.9\linewidth]{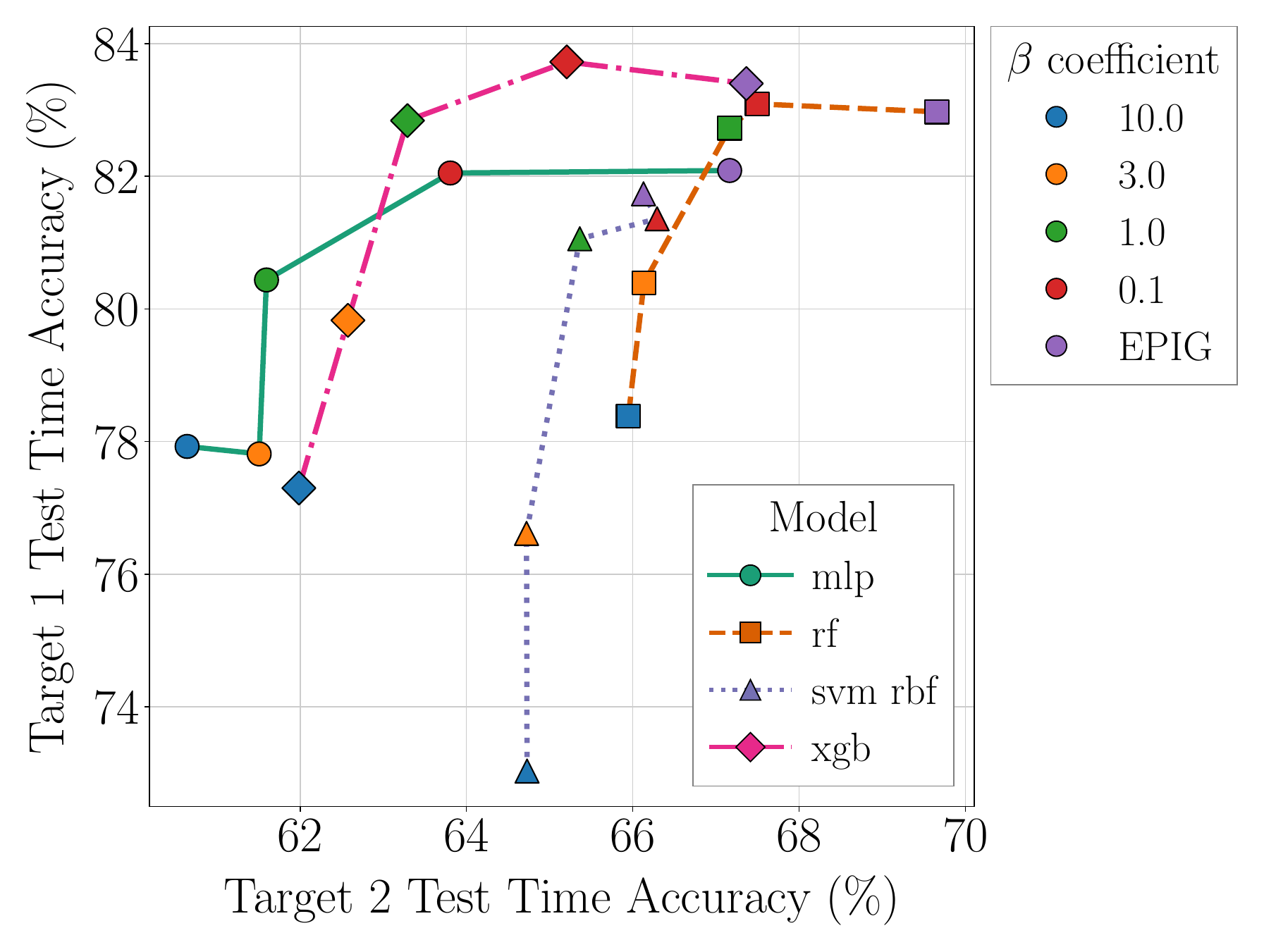}
        \captionof{figure}{{Generalization across downstream models on Student Graduation with RF used during acquisition. Points towards top left preferred.}}
    \label{fig:graduate-pareto-generalization}
    \end{minipage}
    \vspace{-5pt}
\end{figure}
\begin{wrapfigure}{r}{0.5\textwidth}
    \centering
    \vspace{-0.5em}
    \includegraphics[width=\linewidth]{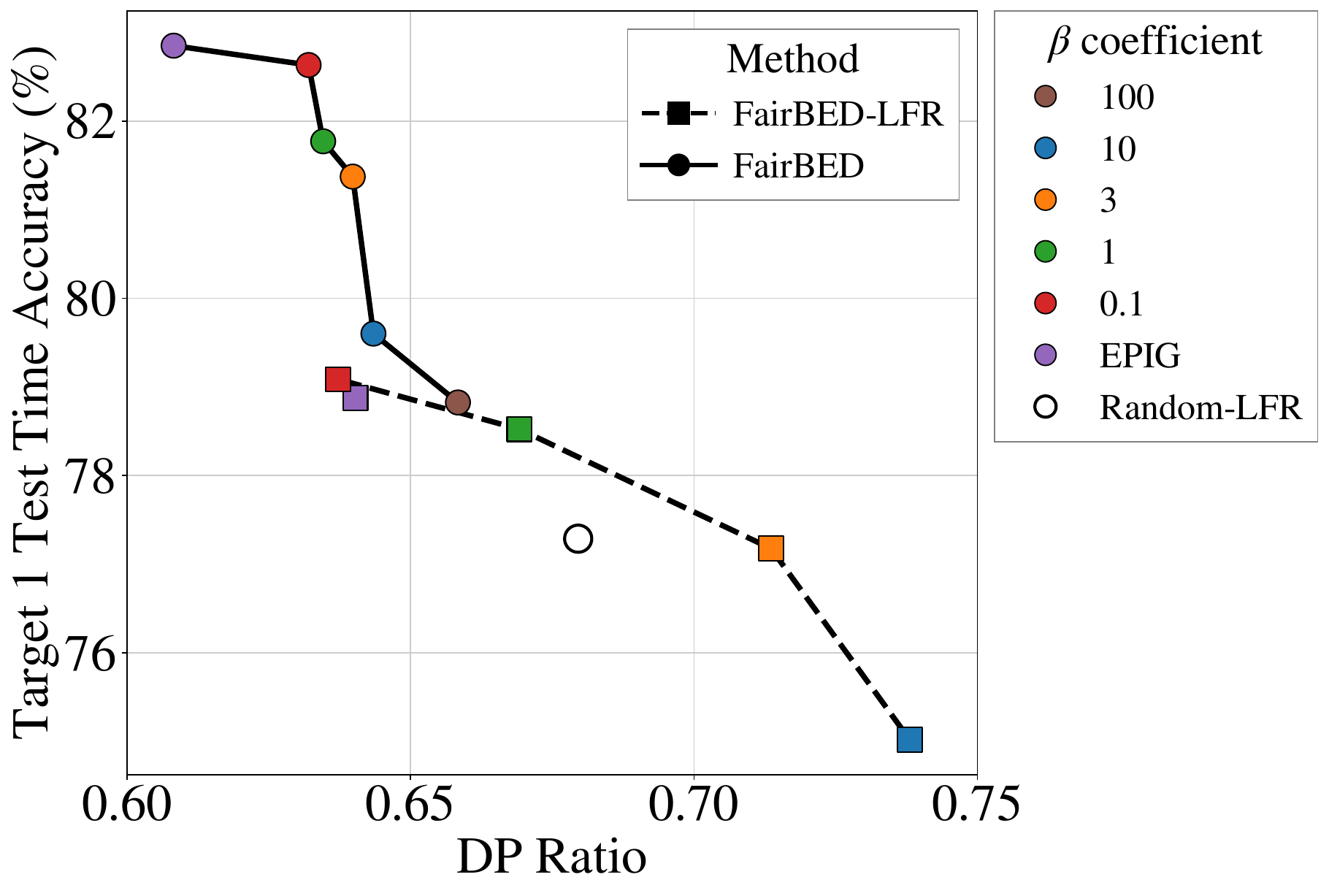}
    \caption{Composition of FairBED with LFR on Graduation dataset (with LFR weight $A_z{=}1.0$) compared with using LFR on model trained with random data.  Points towards top right preferred. 
    }
    \label{fig:lfr-graduate-dp-acc}
    \vspace{-3.5em}
\end{wrapfigure}
We see that FairBED provides an improved pareto front compared to random acquisition while further allowing for customization in the trade-off.
Figure~\ref{fig:graduate-dp-acc} further provides an alternative view of the trade-off showing the pareto front of target accuracy to DP ratio, again showing an improvement over random acquisition.
Interestingly, we also see here a clear improvement in the pareto front when the label budget is larger. 
Additional ablations on generalization capability and fairness analysis can be found in the Appendix (Figs.~\ref{fig:al_multiple_beta_iteration_curves},~\ref{fig:census_dp},~\ref{fig:al_multiple_beta_iteration_curves_graduate},~\ref{fig:graduate_dp}).

\subsection{Fairness as a property of the acquired data}
\label{sub:generalization_across_models}
\looseness=-1
To show that fairness is a property of the acquired data itself, we next \emph{acquire} labeled points with the random forest ensemble and FairBED objective, \emph{freeze} the acquired dataset, and train model classes (random forest, XGBoost, MLP, SVM-RBF) on the data. As shown in Figs.~\ref{fig:census-pareto-generalization} and~\ref{fig:graduate-pareto-generalization}, varying $\beta$ yields the same trade-off across all downstream classifiers, confirming that the reduced sensitive-attribute predictability is a property of the acquired dataset, not an artifact of the acquisition model.

\subsection{Contextualising Performance: Additional Comparisons}
\label{sec:downstream_mitigations}

\looseness=-1
FairBED targets fair data, so it neither precludes
nor competes with downstream fairness interventions. We therefore consider combining it with downstream fairness approaches and compare to just using these in isolation.
Namely, we consider using Learned Fair Representations (LFR;
\citep{fair-representations}, pre-processing, App.~\ref{app:lfr}).
Using
Reject Option Classification post-processing (ROC; \citep{kamiran2012decision}) instead is considered in App.~\ref{app:roc}.

Figure~\ref{fig:lfr-graduate-dp-acc} compares running LFR on FairBED acquired data to running LFR on randomly acquired data, as well as just  FairBED with no pre-processing to anchor performance.
We see that FairBED-LFR Pareto-dominates LFR applied to randomly
acquired data, showing that upstream acquisition and downstream
representation learning compose constructively---the gains are
additive rather than substitutive.

\section{Conclusion}
\looseness=-1
In this work, we proposed FairBED, which provides a novel formalism for the fairness of \emph{data} and operationalizes this to construct practical fairness-aware Bayesian experimental design objectives that actively target the gathering of fair data.
Specifically, it measures the unfairness of data in how much \emph{information} it conveys about sensitive attributes and then constructs objectives that trade this off with the information the data provides for the variables we wish to predict.
We showed that FairBED is theoretically linked to demographic parity, and empirically showed that the data gathered using FairBED naturally leads to fairer downstream models and allows improved trade-offs with predictive accuracy.

\bibliography{refs}
\bibliographystyle{icml2026}


\newpage
\appendix
\onecolumn

\section{Proofs}
\label{app:proofs}

\begin{proof}
Because the EIG is strictly non-negative, $F_c(\pi_d)=0$ implies that $\mathrm{EIG}_{\phi\mid\theta}^m=0$ almost surely for all $m\in\mathbb{M}$, which is
equivalent to $I_m(\mathbf{D};\phi\mid\theta, \pi_d)=0$.
By the defining property of conditional mutual information, this 
implies conditional independence, such that $\forall m \in \mathbb{M}$
\begin{equation}
p_m(\mathbf{D} \mid \theta,\phi, \pi_d) \;=\; p(\mathbf{D} \mid \theta, \pi_d) \quad \forall \theta
\label{eq:data_indep}
\end{equation}
and by extension also $p_m(\mathbf{D}\mid \phi, \pi_d) \;=\; p(\mathbf{D} \mid \pi_d)$ given our factorisable prior assumption.
Now consider the posterior over $\theta$ given $(\mathbf{D},\phi)$:
\begin{align}
p_m(\theta \mid \mathbf{D},\phi)
&= \frac{p_m(\mathbf{D} \mid \theta,\phi,\pi_d)\,p_m(\theta \mid \phi)}{p_m(\mathbf{D} \mid \phi, \pi_d)} \\
&= \frac{p_m(\mathbf{D} \mid \theta,\phi,\pi_d)\,p(\theta)}{p_m(\mathbf{D} \mid \phi,\pi_d)}
\qquad\text{by the factorisable prior assumption} \\
&= \frac{p(\mathbf{D} \mid \theta, \pi_d)\,p(\theta)}{p(\mathbf{D}|\pi_d)}
\qquad\text{by \eqref{eq:data_indep}} \\
&= p(\theta \mid \mathbf{D}),
\label{eq:posterior_eq}
\end{align}
where our dependence on $\pi_d$ has gone away because it only influences learning through $\mathbf{D}$ and the dependence on our choice of model from $\mathbb{M}$ has gone away because all models are equivalent once $\phi$ is marginalised out.
Finally, evaluate the posterior predictive distribution:
\begin{align}
p(z \mid \phi,\mathbf{D})
&= \int p(z \mid \theta)\,p_m(\theta \mid \mathbf{D},\phi)\,d\theta 
= \int p(z \mid \theta)\,p(\theta \mid \mathbf{D})\,d\theta = p(z|\mathbf{D})
\label{eq:final_pred}
\end{align}
yields a predictor that is independent of $\phi$ for all $\mathbf{D}$, regardless of our choice of $p(z \mid \theta)$ or our chosen model $m\in\mathbb{M}$.
We thus obtain the required demographic parity defined in~\eqref{eq:dem-parity-pred}.
\end{proof}

\subsection{Exploring the Fixed Predictive Model $p(z|\theta)$}
\label{sec:exploring-predictor}

Within~\eqref{eq:final_pred} we assumed that the fixed predictive model had the form $p(z \mid \theta)$: the predictor depends only on the target parameters. This section explores several sufficient conditions under which this form is naturally recovered.
 
\medskip

The more general form of the fixed predictive model takes $x$, $\theta$, and $\phi$ as inputs, where $x$ is some input provided at test time that might itself be dependent on $\phi$. The corresponding posterior predictive distribution is
\begin{equation}
\label{eq:general_pred}
p(z \mid \phi, \mathbf{D}) = \int\!\int p(z \mid x, \theta, \phi)\, p(\theta \mid \mathbf{D})\, p(x \mid \phi)\, dx\, d\theta,
\end{equation}
where $p(\theta \mid \mathbf{D})$ is $\phi$-invariant by~\eqref{eq:posterior_eq}. While FairBED ensures no $\phi$-dependence due to the data $\mathbf{D}$, two channels of $\phi$-dependence remain: the fixed predictor $p(z \mid x, \theta, \phi)$ itself, and the test-time covariate distribution $p(x \mid \phi)$. 
 
A natural normative choice at deployment is to exclude $\phi$ from the predictor entirely, so that
\begin{equation}
\label{eq:phi_free}
p(z \mid x, \theta, \phi) = p(z \mid x, \theta).
\end{equation}
This eliminates the first channel of $\phi$-dependence by construction. The question then reduces to the role of the test-time covariates $x$ and their relationship to $\phi$.

To begin, observe that regardless of the relationship between $x$ and $\phi$, the \emph{conditional} predictor at any fixed $x$ is already $\phi$-independent:
\begin{equation}
\label{eq:cond_dp}
p(z \mid x, \phi, \mathbf{D}) = \int p(z \mid x, \theta)\, p(\theta \mid \mathbf{D})\, d\theta = p(z \mid x, \mathbf{D}),
\end{equation}
since neither $p(z \mid x, \theta)$ nor $p(\theta \mid \mathbf{D})$ depends on $\phi$. This is a \emph{fair} trained predictor: for any specific input $x$, the prediction does not depend on the sensitive attribute. The trained model itself is entirely $\phi$-invariant. Any unfairness in~\eqref{eq:general_pred} therefore comes entirely from the test-time covariate distribution $p(x \mid \phi)$.
We argue that this should be dealt with as a post-processing step at test-time rather than at data acquisition, as $p(x \mid \phi)$ is neither a function of the data nor even the model, and different uses of the model might require different corrections.

We now consider though the conditions under which marginal demographic parity is achieved without requiring any such post-processing.
Specifically, we present two alternative sufficient conditions, with Demographic parity achieved if \emph{either} holds. 
The first operates on the marginal covariate distribution $p(x \mid \phi)$, while the second operates at the level of a fixed $\theta$, introducing the conditional distribution $p(x \mid \theta)$.
 
\begin{condition}[Marginal covariate independence: $x \perp \phi$]
\label{cond:marginal}
The test-time covariate distribution satisfies $p(x \mid \phi) = p(x)$. Then under this condition,~\eqref{eq:general_pred} (the marginal of~\eqref{eq:cond_dp}) becomes 
\begin{equation}
p(z \mid \phi, \mathbf{D}) = \int\!\int p(z \mid x, \theta)\, p(\theta \mid \mathbf{D})\, p(x)\, dx\, d\theta = p(z \mid \mathbf{D}),
\end{equation}
yielding demographic parity. The effective marginal predictor (across different $x$) for a given $\theta$ is $\int p(z \mid x, \theta)\, p(x)\, dx = p(z \mid \theta)$, recovering the form assumed in~\eqref{eq:final_pred}.
\end{condition}

There are settings where full marginal independence of test-time covariates from the sensitive attribute does not hold (Condition~\ref{cond:marginal}): groups defined by $\phi$ often present with systematically different covariate distributions due to structural inequities, selection effects, or genuine population differences (e.g.\ in health settings). When $p(x \mid \phi) \neq p(x)$, full marginal demographic parity may not hold. But crucially,~\eqref{eq:cond_dp} still holds: any residual violation of demographic parity is \emph{entirely attributable to the biased test-time covariate distribution} $p(x \mid \phi)$ and not to the trained model or the data it was trained on. This cleanly isolates the source of unfairness to one identifiable, external component of the pipeline, which is itself a meaningful step forward. Note that settings in which demographic parity is the appropriate fairness criterion are precisely those in which striving for unbiased test-time covariates is a reasonable goal, since the covariate bias reflects structural inequities rather than genuine population differences; in settings where genuine differences do exist (e.g.\ health settings), alternative fairness metrics are likely more appropriate, but~\eqref{eq:cond_dp} ensures that individual predictions remain $\phi$-invariant regardless, so the trained model can be deployed in either regime.
 
The separation of trained-model fairness from test-time covariate fairness established by~\eqref{eq:cond_dp} is particularly valuable in settings where historical data encodes biases that are no longer present at deployment: a model trained naively on such data would inherit those biases and violate demographic parity even when current test-time covariates are unbiased, whereas our approach ensures that the acquired training data does not encode sensitive information, so that when $p(x \mid \phi) = p(x)$ holds at test time, full demographic parity is achieved.

\begin{condition}[Conditional covariate independence: $x \perp \phi \mid \theta$]
\label{cond:conditional}
Sufficient Condition~\ref{cond:marginal} requires marginal independence between $x$ and $\phi$. An alternative is to work at the level of a fixed $\theta$. The effective prediction for a given $\theta$ is obtained by marginalising over the conditional covariate distribution $p(x \mid \theta, \phi)$:
\begin{equation}
\label{eq:theta_pred}
p(z \mid \theta, \phi) = \int p(z \mid x, \theta)\, p(x \mid \theta, \phi)\, dx.
\end{equation}
Now if $x \perp \phi \mid \theta$, so that $p(x \mid \theta, \phi) = p(x \mid \theta)$, then
\begin{equation}
p(z \mid \theta, \phi) = \int p(z \mid x, \theta)\, p(x \mid \theta)\, dx = p(z \mid \theta),
\end{equation}
directly recovering an effective predictor of the form $p(z \mid \theta)$. Demographic parity now follows:
\begin{equation}
p(z \mid \phi, \mathbf{D}) = \int p(z \mid \theta)\, p(\theta \mid \mathbf{D})\, d\theta = p(z \mid \mathbf{D}).
\end{equation}
 \end{condition}
 \medskip
 
\noindent\textbf{Example: insurance risk.} Let $\theta$ denote a policyholder's true risk profile, $x$ their observable characteristics (property age, claim history, driving record), and $\phi$ their skin pigmentation. Condition~\ref{cond:conditional} asserts that for two individuals with identical risk profiles, their observable characteristics do not additionally depend on pigmentation. The marginal correlation between pigmentation and observables exists, but only because structural inequities cause the \emph{distribution of risk profiles} to differ across groups. Once we condition on the actual risk level, pigmentation is redundant for predicting what observables we would see. This is plausible when the sensitive attribute's influence on the observables is fully mediated through the target variable: pigmentation does not cause a house to be older or a driving record to be worse, but it correlates with socioeconomic factors that determine risk, which in turn determine those observables. The condition fails when $\phi$ has a direct causal effect on $x$ that bypasses $\theta$; for instance, in medical settings where gender affects biomarker levels conditional on disease severity.

\begin{remark}[These conditions are sufficient, not necessary]
The fundamental requirement is that the effective predictor, after marginalising over test-time inputs, behaves as $p(z \mid \theta)$. Conditions~\ref{cond:marginal} and~\ref{cond:conditional} identify tractable, verifiable cases where this can be confirmed. However, cancellations between $\phi$-dependence in $p(x \mid \phi)$ (or $p(x \mid \theta, \phi)$) and in $p(z \mid x, \theta)$ could yield a $\phi$-invariant integral even when neither condition holds. 
\end{remark}




\section{Detailed FairBED objective}
\label{app:app_fairbed_bed_detail}

\begin{align*}
    R_u(\pi_d) &= \mathbb{E}_{data} \left[ IG_{\theta} - \beta IG_{\phi} \right] = \mathbb{E}_{data} \left[ IG_{\theta}\right] - \beta ~\mathbb{E}_{data}\left[ IG_{\phi} \right] \\
    &= \mathbb{E}_{P(\theta, \phi) P(\mathbf{D}|\theta,\phi, \pi_d)} \left[ \log \frac{P(\mathbf{D}|\theta, \pi_d)}{P(\mathbf{D}\mid \pi_d)} - \beta \log \frac{P(\mathbf{D}|\phi, \pi_d)}{P(\mathbf{D}\mid\pi_d)} \right] \\
&= \mathbb{E}_{P(\theta, \phi) P(\mathbf{D}|\theta,\phi, \pi_d)}
\left[ \log \frac{P(\mathbf{D}|\theta, \pi_d)}{P(\mathbf{D}|\phi,\pi_d)^\beta} + (\beta - 1) \log P(\mathbf{D}\mid \pi_d) \right] \\
 &= \mathbb{E}_{P(\theta, \phi) P(\mathbf{D}|\theta,\phi, \pi_d)}
\left[ \log P(\mathbf{D}|\theta,\pi_d) - \beta \log P(\mathbf{D}|\phi,\pi_d) + (\beta - 1) \log P(\mathbf{D}|\pi_d) \right] 
\end{align*}

Let $A(\theta,\phi,\mathbf{D},\pi_d)$ denote the integrand inside the expectation; it depends on $\pi_d$ through the three log-density terms. We compute the gradient used to update designs as follows. If designs are produced by a neural network, we backpropagate from $\pi_d$ to network parameters via the chain rule, and in practice we rely on PyTorch's automatic differentiation. The expressions below are the \emph{true} gradients of $R_u(\pi_d)$. In practice, the marginal log-densities $\log p(\mathbf{D}\mid\theta,\pi_d)$, $\log p(\mathbf{D}\mid\phi,\pi_d)$ and $\log p(\mathbf{D}\mid\pi_d)$ are intractable. Therefore, we instead replace $A$ with estimator (sPCE/sNMC) \citep{foster2020unified,foster2021dad}. 

\paragraph{Reparameterized form (used in our experiments).} In the source location finding experiments the observation likelihood $p(\mathbf{D}\mid\theta,\phi,\pi_d)$ is reparameterizable: we can write $\mathbf{D}=g(\varepsilon,\theta,\phi,\pi_d)$ for some base noise $\varepsilon\sim p(\varepsilon)$ that does not depend on $\pi_d$ (here $\varepsilon$ collects the Gaussian observation noise). The sampling distribution is then independent of $\pi_d$ and the gradient is purely pathwise:
\begin{align*}
\nabla_{\pi_d} R_u(\pi_d)
&= \mathbb{E}_{p(\theta,\phi)\,p(\varepsilon)}\!\left[ \nabla_{\pi_d}\, A\big(\theta,\phi,g(\varepsilon,\theta,\phi,\pi_d),\pi_d\big) \right].
\end{align*}
This is the form realised by autograd in our implementation.

\paragraph{General (score-function) form.} When the likelihood is not reparameterizable, the same gradient admits the standard score-function plus direct-derivative decomposition, which we include for completeness:
\begin{align*}
    \nabla_{\pi_d} R_u(\pi_d)
    & = \mathbb{E}_{p(\theta, \phi)} \left[ \int \frac{d}{d\pi_d} \left( p(\mathbf{D}|\theta, \phi, \pi_d)\, A \right) d\mathbf{D} \right] \\
    &= \mathbb{E}_{p(\theta, \phi)} \left[ \int A\, \frac{d}{d\pi_d}p(\mathbf{D}|\theta, \phi, \pi_d)  + p(\mathbf{D}|\theta, \phi, \pi_d)\, \frac{d}{d\pi_d} A ~d\mathbf{D} \right] \\
    &= \mathbb{E}_{p(\theta, \phi)} \left[ \int A\, p(\mathbf{D}|\theta, \phi, \pi_d)\, \frac{d}{d\pi_d} \log p(\mathbf{D}|\theta, \phi, \pi_d)  + p(\mathbf{D}|\theta, \phi, \pi_d)\, \frac{d}{d\pi_d} A ~d\mathbf{D} \right] \\
    &= \mathbb{E}_{p(\theta, \phi)\,p(\mathbf{D}|\theta, \phi, \pi_d)} \left[  A\, \frac{d}{d\pi_d} \log p(\mathbf{D}|\theta, \phi, \pi_d)  +  \frac{d}{d\pi_d} A  \right].
\end{align*}
When reparameterization is available, the score-function term can be avoided and the gradient can instead be estimated by differentiating through the reparameterized sample path, yielding the pathwise form above.

For the conditional form we have:
\begin{align*}
R_c(\pi_d)
&= \mathbb{E}_{p(\theta)\,p(\mathbf{D}\mid\theta,\pi_d)}
\left[\log\frac{p(\mathbf{D}\mid\theta,\pi_d)}{p(\mathbf{D}\mid\pi_d)}\right]
\;-\;\beta\,\mathbb{E}_{p(\theta)\,p(\phi\mid\theta)\,p(\mathbf{D}\mid\theta,\phi,\pi_d)}
\left[\log\frac{p(\mathbf{D}\mid\theta,\phi,\pi_d)}{p(\mathbf{D}\mid\theta,\pi_d)}\right] \\
&= \mathbb{E}_{p(\theta)\,p(\phi\mid\theta)\,p(\mathbf{D}\mid\theta,\phi,\pi_d)}
\left[
\log p(\mathbf{D}\mid\theta,\pi_d) - \log p(\mathbf{D}\mid\pi_d)
-\beta \log p(\mathbf{D}\mid\theta,\phi,\pi_d) + \beta \log p(\mathbf{D}\mid\theta,\pi_d)
\right] \\
&= \mathbb{E}_{p(\theta)\,p(\phi\mid\theta)\,p(\mathbf{D}\mid\theta,\phi,\pi_d)}
\left[
(1+\beta)\log p(\mathbf{D}\mid\theta,\pi_d)
-\beta \log p(\mathbf{D}\mid\theta,\phi,\pi_d)
-\log p(\mathbf{D}\mid\pi_d)
\right].
\end{align*}


\section{Active learning}
\label{app:sec_active_learning}

\paragraph{Active learning (AL) baseline.}
In AL \citep{houlsby2011bayesian,pmlr-v206-bickfordsmith23a} we choose inputs \(\text{x}\) whose labels \(y\) will be queried so as to reduce the epistemic uncertainty of a classifier with predictive distribution \(p_{\!\mu}(y\mid \text{x})\). The ensuing labeled set accelerates training relative to randomly sampling inputs to label.

\begin{equation}
    p_\mu(y \mid \text{x})
    = \mathbb{E}_{p_\mu(\upsilon)}\!\left[p_\mu(y \mid \text{x}, \upsilon)\right],
\end{equation}

\paragraph{Expected predictive information gain (EPIG).}
The EPIG goal extends this notion. Let the unseen test-time input distribution follow \(\text{x}_* \sim p_*(\text{x}_*)\).  The information gained about their labels \(y_*\) after querying \((\text{x},y)\) is
\begin{equation}
    \mathrm{IG}_{y_*}(\text{x},y,\text{x}_*) \;=\;
    \mathbb{H}\!\big[p_{\!\mu}(y_* \mid \text{x}_*)\big]
    \;-\;
    \mathbb{H}\!\big[p_{\!\mu}(y_* \mid \text{x}_*,\text{x},y)\big].
    \label{eq:app_active_learning_IG}
\end{equation}

Taking expectations over the random test input \(x_*\) and the unknown queried label \(y\) yields the standard EPIG acquisition score. 
\begin{equation}
    \mathrm{EPIG}(\text{x})
    =\;
    \mathbb{E}_{\,\text{x}_* \sim p_*,\,y \sim p_{\!\mu}(\cdot\mid \text{x})}
    \bigl[\mathrm{IG}_{y_*}(\text{x},y,\text{x}_*)\bigr].
    \label{eq:app_epig_standard}
\end{equation}

Note that the target inputs are random as at test time we do not know which input will be sampled which is in contrast to the input $x$ which is selected during training time.

\subsection{FairBED AL Objectives.}
\label{app:subsec-FairBED-AL}
Our implementation introduces two predictive distributions:

\[
  p_{\!\mu}(y_\theta \mid \text{x}) \quad\text{(target, e.g.\ IQ)},\qquad
  q_{\!\mu}(y_\phi \mid \text{x}) \quad\text{(sensitive, e.g.\ gender)}.
\]

At test time we observe the same inputs \(\text{x}_*\) but obtain two labels \((y_{*\theta},y_{*\phi})\).  We seek queries that are informative about the target \(y_{*\theta}\) yet uninformative about \(y_{*\phi}\). Here, we adapt the notation from the general BED case such that $\theta := (\text{x}_*,\, y_{*\theta})$ and $\phi := (\text{x}_*,\, y_{*\phi})$. The resulting active-learning objective becomes

\begin{align}
    \mathcal{L}_{\text{FairBED-uncond}}(\text{x}) &= \mathbb{E}_{data} \left[ IG_{\theta} - \beta IG_{\phi} \right]  = \mathbb{E}_{data} \left[ \mathrm{IG}_{y_*\theta} - \beta~\mathrm{IG}_{y_*\phi} \right]  \label{eq:app_fairbed_AL} \\ 
    &=\;
    \mathbb{E}_{\,x_*, x_*' \sim p_*}
      \Bigl[
        \mathbb{E}_{y_\theta \sim p_{\!\mu}(\cdot\mid \text{x})}
          \bigl[\mathrm{IG}_{y_{*\theta}}(\text{x},y_\theta,\text{x}_*)\bigr]
        \;-\;
        \beta ~\mathbb{E}_{y_\phi \sim q_{\!\mu}(\cdot\mid \text{x})}
          \bigl[\mathrm{IG}_{y_{*\phi}}(\text{x},y_\phi,\text{x}_*')\bigr]
      \Bigr] \\
      & = \mathcal{L}_{\text{FairBED-EPIG}}(\text{x})
\end{align}

Where we rename the objective \(\mathcal{L}_\text{FairBED-EPIG}\) for the AL setting. Maximizing the objective chooses data points that sharpen the target predictor while deliberately limiting information about the sensitive predictor.

We also note that the conditional variant of FairBED under two distinct predictive models reduces down to the same objective. Consider the following general case, noting the equivalence of EIG and MI.

\begin{align}
\mathcal{L}_{\text{FairBED-cond}}(\text{x}) &= I_m(\theta ; \mathbf{D} \mid \text{x})- \beta ~\mathbb{E}_{p(\theta)}[I_m(\phi ; \mathbf{D} \mid \theta, \text{x})] 
\end{align}
In the general AL case we have $\mathbf{D} := (\text{x}, y_\theta, y_\phi)$, $\theta := (\text{x}_*,\, y_{*\theta})$ and $\phi := (\text{x}_*',\, y_{*\phi})$. 

\begin{align}
\mathcal{L}_{\text{FairBED-cond}}(\text{x}) &= I_m((\text{x}_*,\, y_{*\theta}) ; \mathbf{D} \mid \text{x})- \beta ~\mathbb{E}_{p(\theta)}[I_m(\phi ; \mathbf{D} \mid \theta, \text{x})] 
\end{align}

In the case where we use two distinct models to model the $y_\theta$ and $y_\phi$ distributions, the objective then resolves into the FairBED-EPIG objective \eqref{eq:fairbed_AL} due to no mechanism to pass $\theta$ dependence when under the alternative model $q_\mu$.

\begin{align}
\mathcal{L}_{\text{FairBED-cond}}(\text{x}) & = \mathbb{E}_{\text{x}_*\sim p_*}\!\left[I_{p_{\mu}}(y_\theta;\,y_{*\theta}\mid \text{x},\text{x}_*)\right] - \beta~I_m(\phi ; \mathbf{D} \mid \text{x})\\
&= \mathbb{E}_{\text{x}_*, \text{x}_*' \sim p_*}
\Bigl[
\mathbb{E}_{y_\theta \sim p_{\!\mu}(\cdot\mid \text{x})}
\bigl[\mathrm{IG}_{y_{*\theta}}(\text{x},y_\theta,\text{x}_*)\bigr] -
\beta\,
\mathbb{E}_{y_\phi \sim q_{\!\mu}(\cdot\mid \text{x})}
\bigl[\mathrm{IG}_{y_{*\phi}}(\text{x},y_\phi,x_*')\bigr]
\Bigr] \\
& = \mathcal{L}_{\text{FairBED-EPIG}}(\text{x})
\end{align}

We also introduce a predictive entropy baseline \eqref{eq:app_fairbed_AL_entropy}. Note as discussed in \citet{pmlr-v206-bickfordsmith23a}, the entropy formulation loses the notion of a test-time distribution $p_*(x_*)$, so is more liable to the pathologies from not considering this setting.

\begin{equation}
R_{\mathrm{HFE}}(x)
= \mathbb{H}\!\left[p(y_\theta \mid x,\mathcal{D})\right]
- \beta\,\mathbb{H}\!\left[p(y_\phi \mid x,\mathcal{D})\right]    
\label{eq:app_fairbed_AL_entropy}
\end{equation}


\section{EIG Estimators}

\label{app:pce_nmc_estimators}

The Nested Monte Carlo (NMC) bound \citep{rainforth2018nesting} provides an upper bound on the EIG such that,
\begin{align}
    \text{EIG}(\xi) \leq \mathbb{E}_{p(\theta_0,y|\xi)p(\theta_{1:L})} \left[ \log \frac{p(y|\theta_0,\xi)}{\frac{1}{L} \sum_{l=1}^L p(y | \theta_l,\xi)} \right]
\end{align}
with $\theta_0, y \sim p(\theta_0, y|\xi)$ and $\theta_{1:L} \sim p(\theta)$.

The sequential Prior Contrastive Estimator (sPCE) bound \citep{foster2021dad} provides a lower bound on the EIG such that,
\begin{align}
    \text{EIG}(\xi) \geq \mathbb{E}_{p(\theta_0,y|\xi)p(\theta_{1:L})} \left[ \log \frac{p(y|\theta_0,\xi)}{\frac{1}{L+1} \sum_{l=0}^L p(y | \theta_l,\xi)} \right]
\end{align}
with $\theta_0, y \sim p(\theta_0, y|\xi)$ and $\theta_{1:L} \sim p(\theta)$.

\section{Additional Related Work}
\textbf{Data acquisition and BED.}~~
Canonical BED has followed a two-step strategy of posterior inference followed by EIG maximization \citep{ryan2014towards, vincent2017darc, lindley1956, myung2013, rainforth2024modern}. BED has recently been extended to: non-myopic objectives that plan over future design steps \citep{foster2021dad, decision-making-BED, Huang2025ALINEJA, bracher2025jadaijointlyamortizingadaptive}, bounds and surrogates for tractable estimation of EIG \citep{foster2021dad, ivanova2021implicit, iqbal2024nestingparticlefiltersexperimental, iollo2025bayesianexperimentaldesigncontrastive}, implicit settings working with samples from the likelihood \citep{ivanova2021implicit, blau2022optimizing, barlas2025performancecomparisonsreinforcementlearning, kleinegesse2020sequential, lim2022policybased}, semi-amortized approaches \citep{hedman2025stepdad} and moving beyond EIG~\citep{kerrigan2026a}.
\looseness=-1
Existing work predominantly treats the objective as unconstrained. 

\section{Experiment Details}\label{app:experiment_details}

\subsection{Location Finding}\label{app:location_finding}

We consider a synthetic source-localization task in which the goal is to infer the unknown locations $\psi$ of $K$ sources (with $K$ treated as known). For any queried design point $\text{x}$, we observe an intensity measurement $y$ which is a noisy version of the log of an underlying signal model. Specifically, the mean log-intensity is determined by
\begin{equation}
    \mu(\psi, \text{x}) = b + \sum_{k=1}^{K} \frac{\alpha_k}{\left( m +  \lvert\lvert \psi_k - \text{x} \rvert\rvert \right)^2}. 
\end{equation}
where $b>0$ is a constant background level, $m$ is a fixed offset controlling saturation/maximum signal, and $\alpha_k$ denotes the contribution strength of source $k$. Observations follow a log-normal noise model:

\begin{equation}
    \log [y \mid \psi, \text{x}] \sim \mathcal{N}(\log \mu(\psi, \text{x}), \sigma^2).    
\end{equation}

During training, we place an i.i.d. standard Gaussian prior on the source locations,
\begin{equation}
\psi_k \overset{\text{i.i.d.}}{\sim} \mathcal{N}(\eta_d, I_d), \quad k=1,\dots,K.
\end{equation}

\subsubsection{Training details}
\label{app:training_details_loc-find}

Tables \ref{tab:encoder_loc_find_architecture} and \ref{tab:decoder_loc_find_architecture} describe the architecture of the DAD policy network.
Tables \ref{tabapp:parameters} and \ref{tab:parameter_comparison_pretrain} describe the model hyper-parameters.

\begin{table}[H]
    \centering
    \captionof{table}{\textbf{Source location finding.} Encoder network $E_{\phi_1}$, architecture \citep{foster2021dad}}
    \begin{tabular}{ c  c  c  c }
        \toprule
        Layer & Overview & Dimension & Activation \\
        \midrule
        Design-outcome & $\text{x}, y$ & 3 & - \\
        H1 & Fully connected & 64 & RELU \\
        H2 & Fully connected & 256 & RELU \\
        Output & Fully Connected & 16 & - \\
        \bottomrule
    \end{tabular}
    \label{tab:encoder_loc_find_architecture}
\end{table}
\begin{table}[H]
    \centering
    \captionof{table}{\textbf{Source location finding.} Decoder network $F_{\phi_2}$, architecture \citep{foster2021dad}}
    \begin{tabular}{ c  c  c  c }
        \toprule
        Layer & Overview & Dimension & Activation \\
        \midrule
        Input & $E(h_t)$ & 16 & - \\
        H1 & Fully connected & 128 & RELU \\
        H2 & Fully connected & 16 & RELU \\
        Output & $\text{x}$ & 2 & - \\
        \bottomrule
    \end{tabular}
    \label{tab:decoder_loc_find_architecture}
\end{table}

\begin{table}[H]
    \centering
    \caption{\textbf{Source location finding.} Parameter Values}
    \begin{tabular}{l l}
        \toprule
        Parameter & Value \\
        \midrule
        $\alpha_k$ & 1 for all $k$ \\
        Max signal, $m$ & $10^{-4}$ \\
        Base signal, $b$ & $10^{-1}$ \\
        Observation noise scale, $\sigma$ & 0.5 \\
        \bottomrule
    \end{tabular}
    
    \label{tabapp:parameters}
\end{table}

\begin{table}[H]
    \centering
    \caption{\textbf{Source location finding.} Parameters for training DAD.}
    \begin{tabular}{lc}
        \toprule
        Parameter & Value  \\
        \midrule
        Batch size & 100  \\
        Number of negative samples & 1000 \\
        Number of gradient steps (default) & 10K \\
        Learning rate (LR) & 0.00005  \\
        Number of samples for marginalizing a parameter & 1000 \\
        Number of inner samples (evaluation) & 100000 \\
        Number of theta (evaluation) & 100 \\
        \bottomrule
    \end{tabular}
    \label{tab:parameter_comparison_pretrain}
\end{table}

\subsubsection{Location finding Fairness--utility frontiers}
We sweep $\beta$ for both \textbf{static} and \textbf{DAD} parameterizations, showing monotone suppression of $\mathrm{EIG}_\phi$ with a controlled reduction in $\mathrm{EIG}_\theta$ (Figs.~\ref{fig:loc_find_static_beta_varying}--\ref{fig:loc_find_policy_beta_varying} and Tab~\ref{tab:loc_find_ratio_only}; extended ranges and Pareto visualizations are provided in Figs.~\ref{fig:loc_find_static_LARGE_beta_varying}--\ref{fig:loc_find_static_LARGE_beta_varying_pareto}). Achieving comparable reductions in $\mathrm{EIG}_\phi$ often requires larger $\beta$ in the policy case. 

\begin{figure}[H]
    \centering

    \begin{subfigure}[t]{0.48\linewidth}
        \centering
        \includegraphics[width=\linewidth]{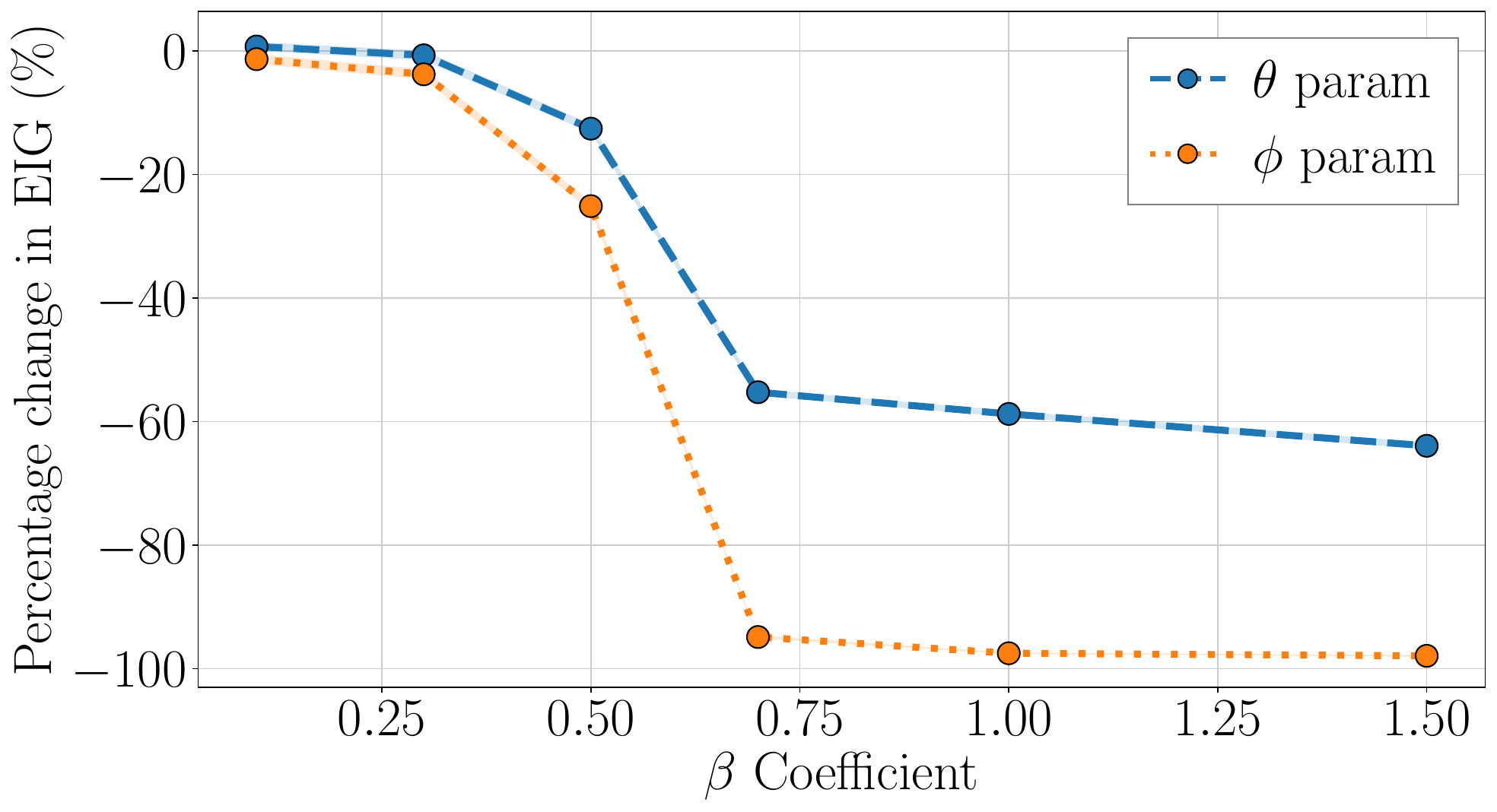}
        \caption{\textbf{Static, $T{=}10$.}
        Increasing $\beta$ strengthens unlearning of the sensitive coordinate $\phi$ with some target learning trade-off. $\mathrm{EIG}_\phi$ exhibits diminishing returns at larger $\beta$ near saturation, whereas $\mathrm{EIG}_\theta$ remains significant over the same range.}
        \label{fig:loc_find_static_beta_varying}
    \end{subfigure}
    \hfill
    \begin{subfigure}[t]{0.48\linewidth}
        \centering
        \includegraphics[width=\linewidth]{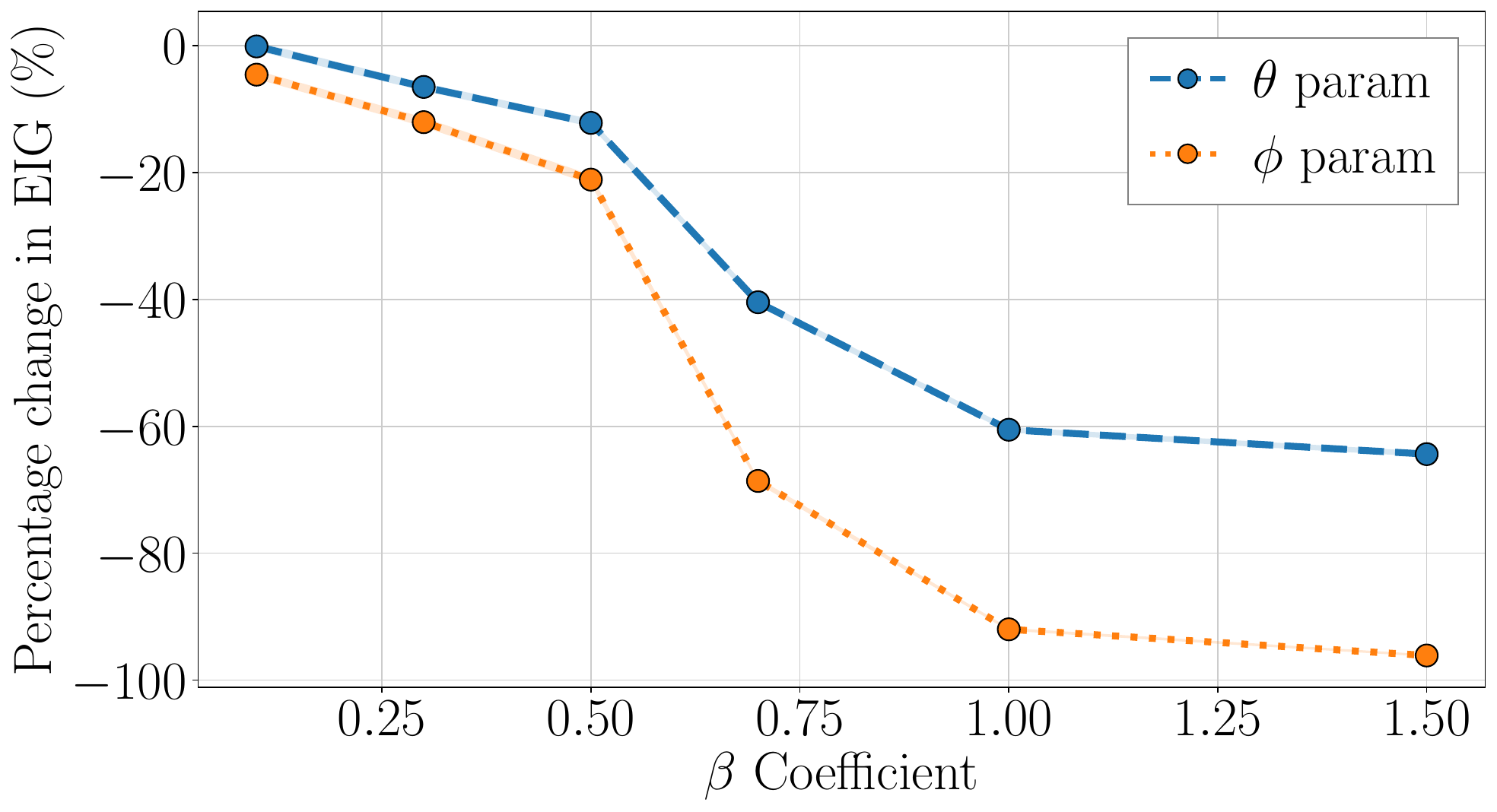}
        \caption{\textbf{DAD, $T{=}10$.}
        Qualitatively similar trade-offs to the static case, but with a shifted $\beta$ scale: larger $\beta$ is typically needed to obtain comparable suppression of $\mathrm{EIG}_\phi$.}
        \label{fig:loc_find_policy_beta_varying}
    \end{subfigure}

    \caption{\textbf{Fairness--utility frontier induced by $\beta$.}
    Comparison of static and DAD acquisition settings. Error bars denote $\pm$ 1 standard error (too small to be seen).}
    \label{fig:loc_find_beta_varying}
\end{figure}

\subsubsection{Location finding ablation study}

In Figs.~\ref{fig:loc_find_policy_LARGE_beta_varying} and~\ref{fig:loc_find_static_LARGE_beta_varying} we demonstrate how $\beta$ informs the trade-off between information gain in $\theta$ against information gain for the sensitive attribute $\phi$ for static designs and policy-based DAD \citep{foster2021dad}. Increasing $\beta$ encourages unlearning along $\phi$, at the cost of reduced information gain along $\theta$. For the static design, as $\beta$ increases, unlearning along $\phi$ plateaus with approximately 98\% reduction in EIG, while $\theta$ retains significant information. This view extends the experiments for $\beta > 1$  and confirms that this plateau persists. Both use objective \eqref{eq:fairbed_conditional_loss}.

\begin{figure}[H]
    \centering

    \begin{minipage}[t]{0.48\linewidth}
        \centering
        \includegraphics[width=0.9\linewidth]{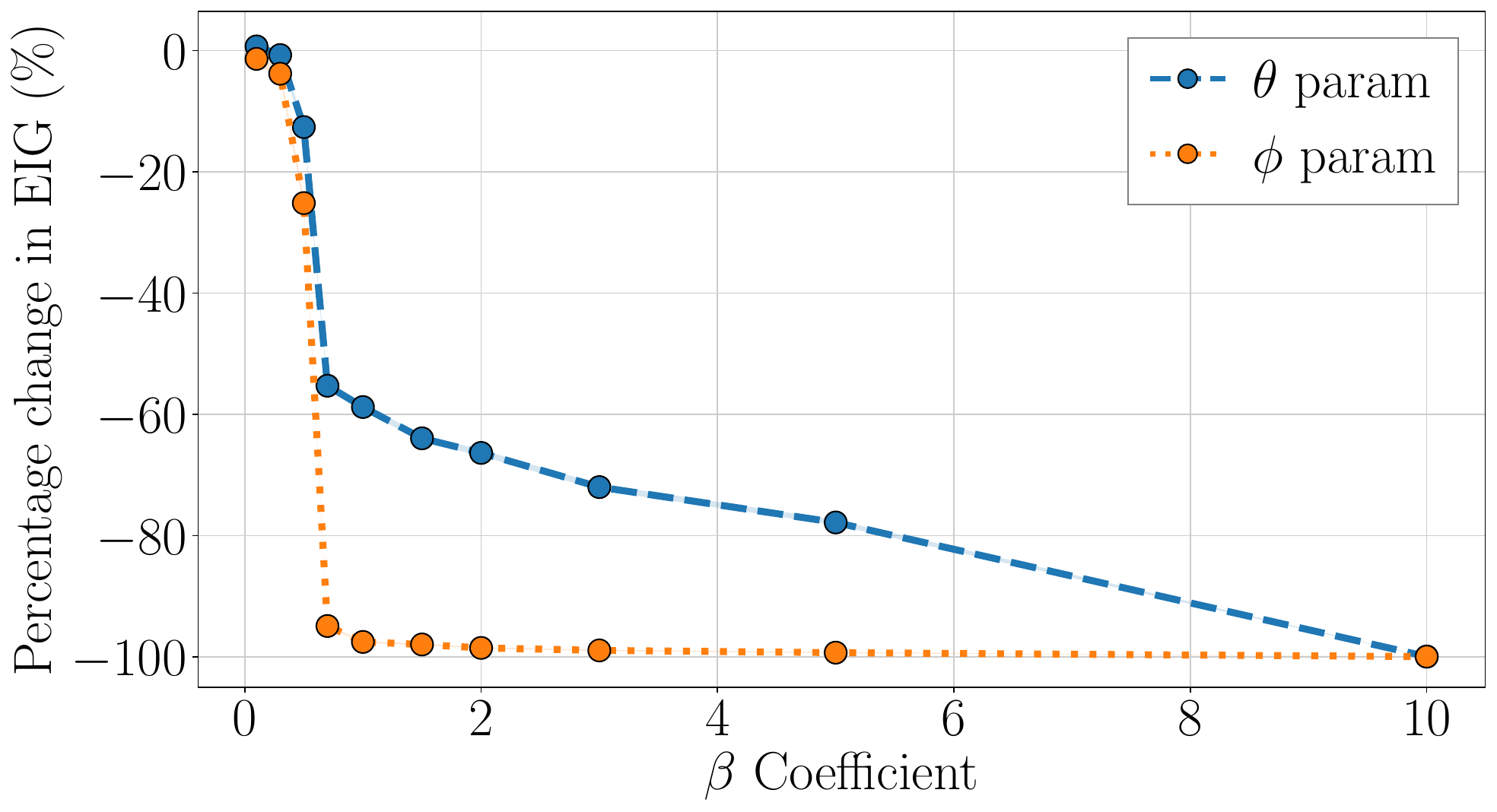}
        \captionof{figure}{\textbf{Sensitivity to $\beta$} in the Source Location Finding experiment with a static policy ($T=10$). Increasing $\beta$ promotes unlearning along $\phi$ at the expense of information gain along $\theta$, with $\phi$ quickly plateauing while $\theta$ retains substantial information even for $\beta>1$. Errors $\pm$ 1 std err but are too small to be seen.}
    \label{fig:loc_find_static_LARGE_beta_varying}
    \end{minipage}
    \hfill
    \begin{minipage}[t]{0.48\linewidth}
        \centering
    \includegraphics[width=0.9\linewidth]{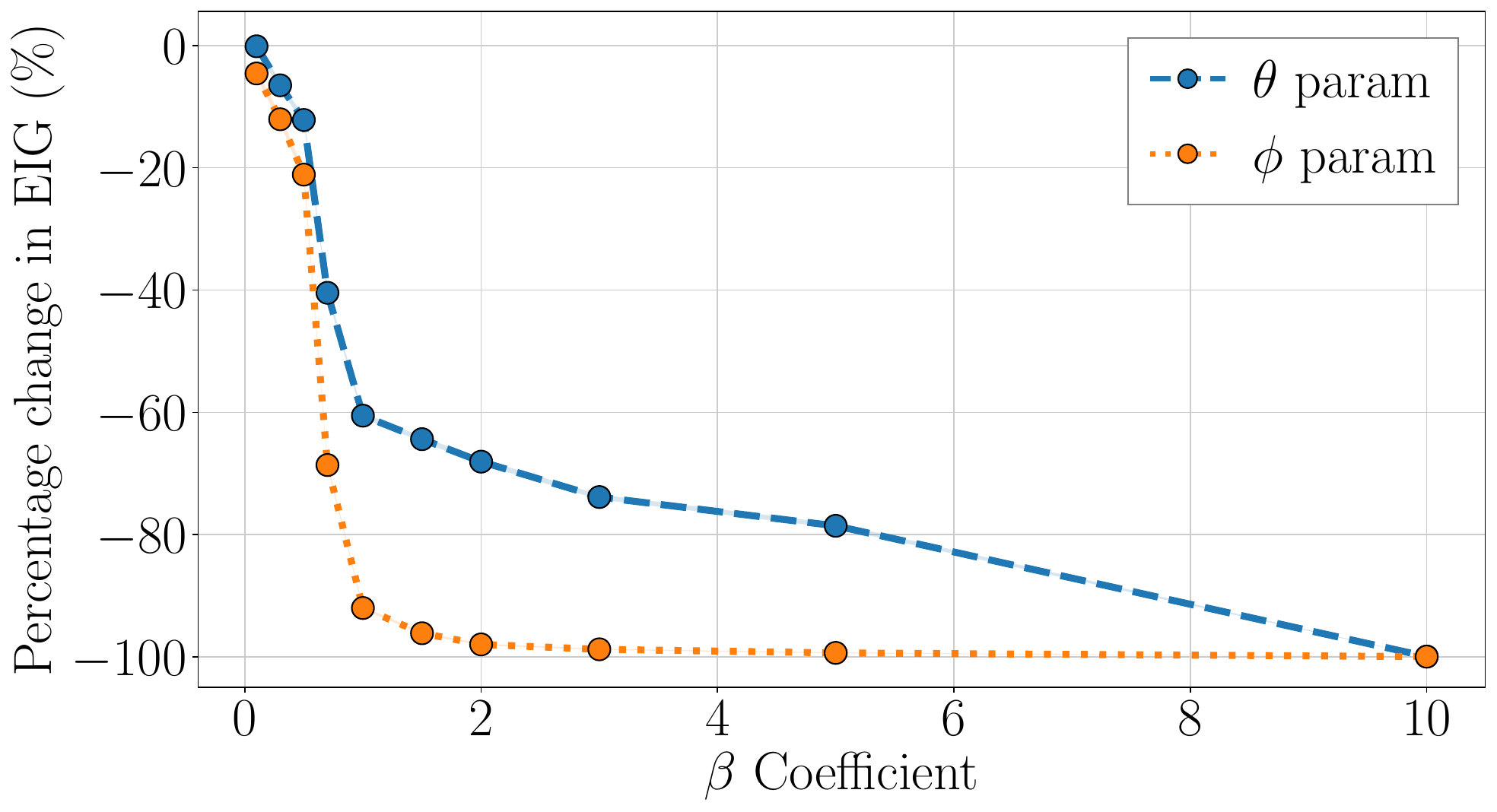}
        \captionof{figure}{\textbf{Sensitivity to $\beta$ (DAD, $T{=}10$).}
    Extending to larger values of $\beta$ compared to Fig.~\ref{fig:loc_find_policy_beta_varying}. Trained under objective \eqref{eq:fairbed_conditional_loss}. Errors $\pm$ 1 std err but are too small to be seen.}
    \label{fig:loc_find_policy_LARGE_beta_varying}
    \end{minipage}
\end{figure}

In Fig.~\ref{fig:loc_find_static_LARGE_beta_varying_pareto} we demonstrate the choice of $\beta$ and its impact in preserving the target accuracy of $\theta$ in terms of the ratio of EIG in the corresponding parameter for a policy trained under FairBED vs a policy trained to target $EIG_\theta$ with sPCE objective. Increasing $\beta$ encourages unlearning along $\phi$, at the cost of reduced information gain along $\theta$. 

\begin{figure}[H]
    \centering
    \includegraphics[width=0.5\linewidth]{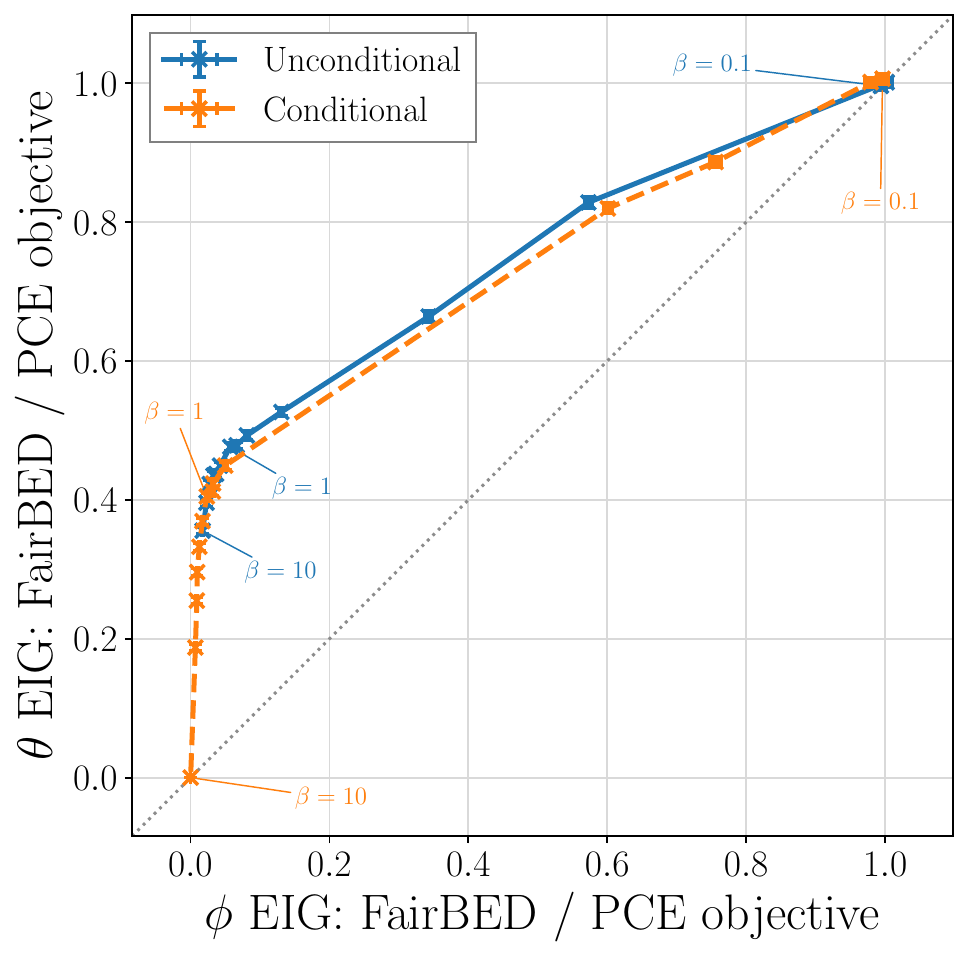}
    \caption{\textbf{Sensitivity to $\beta$} in the Source Location Finding experiment with a static policy ($T=10$) under conditional objective~\eqref{eq:fairbed_conditional_loss} and unconditional objective~\eqref{eq:fairbed_loss}. Increasing $\beta$ promotes unlearning along $\phi$ at the expense of information gain along $\theta$. The top left of the chart is the ideal region. Axes denote the ratio of EIG in the corresponding parameter for a policy trained under FairBED vs a policy trained to target $EIG_\theta$ with sPCE objective. $Y=\text{x}$ line denotes points where reduction in $EIG_\theta$ is equivalent to reduction in $EIG_\phi$. Errors $\pm$ 1 std err.}
    \label{fig:loc_find_static_LARGE_beta_varying_pareto}
\end{figure}
\FloatBarrier
\paragraph{Sequential horizon} In Fig.~\ref{fig:loc_find_static_T_varying} we evaluate the relationship between the number of experiments $T$ and lower bound EIG estimates for $\theta$ and $\phi$ for the static design when $\beta=0.5$. We observe a similar pattern, with significantly larger relative drops in sensitive-attribute $EIG_\phi$ than in target-attribute $EIG_\theta$. 

\begin{figure}[H]
    \centering
    \includegraphics[width=0.6\linewidth]{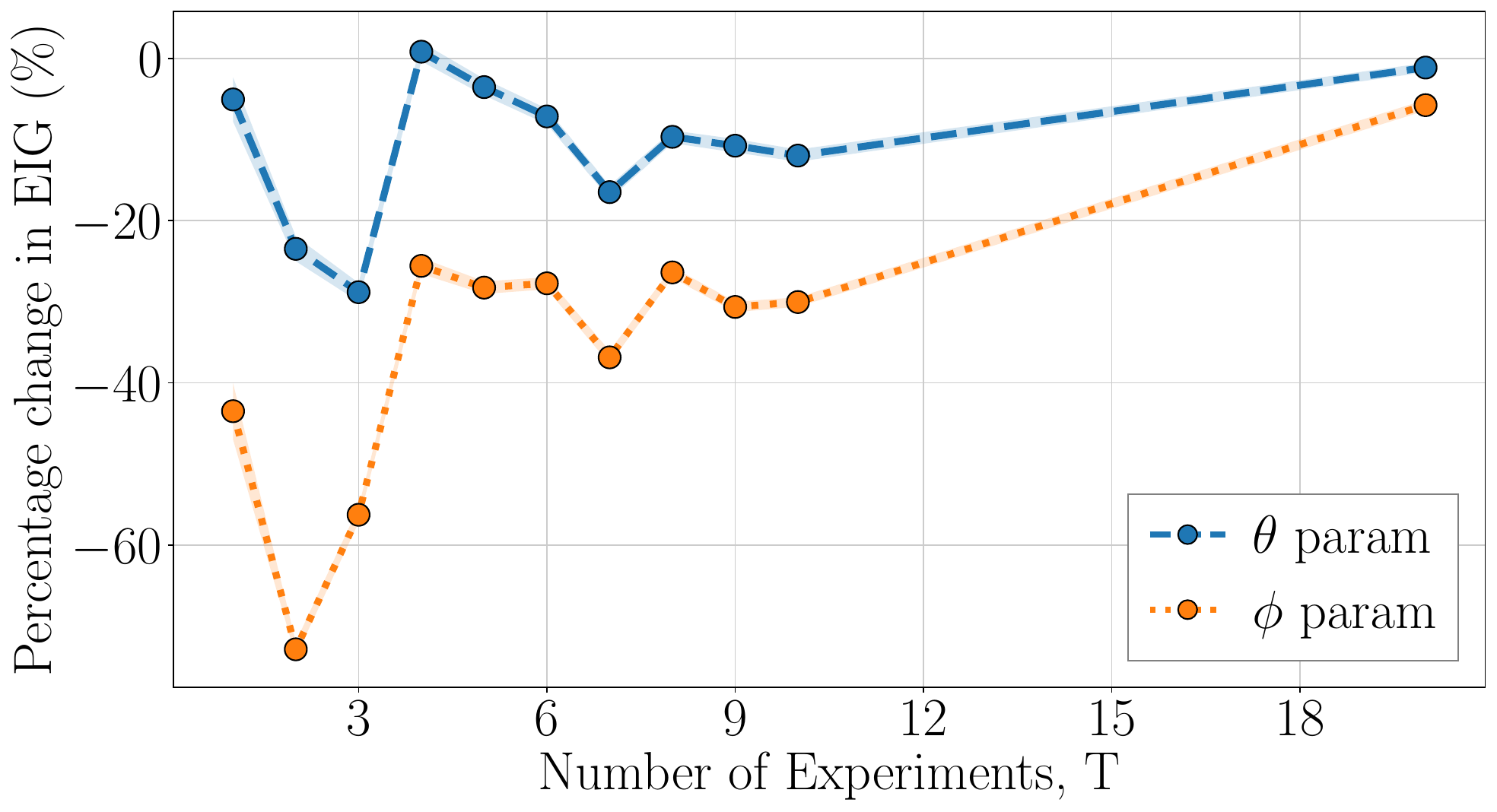}
    \caption{\textbf{Sensitivity to horizon $T$ (static, $\beta{=}0.5$).}
     Plots show lower-bound EIG estimates. Error bars $\pm$ 1 std err.}
    \label{fig:loc_find_static_T_varying}
\end{figure}

\newpage
\subsection{Active Learning: Census}\label{app:census}
\begin{figure}[t]
    \centering
        \includegraphics[width=0.6\linewidth]{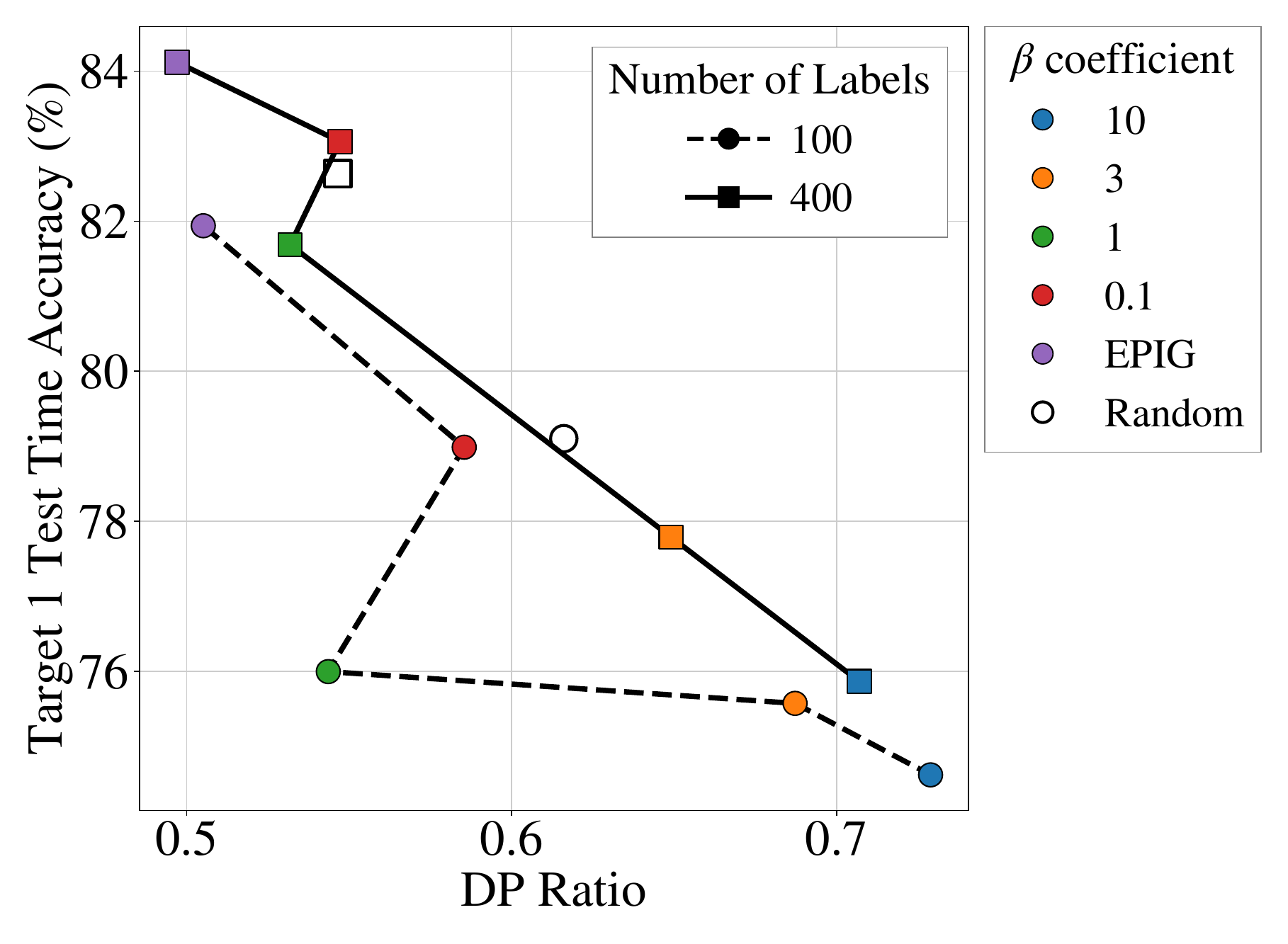}
    \caption{\textbf{Varying $\beta$ on Census.} Accuracy vs DP ratio}
    \label{fig:census-dp-acc}
\end{figure}
Fig. \ref{fig:al_multiple_beta_iteration_curves} presents target and sensitive attribute accuracy across an increasing number of acquired labels for varying $\beta$ and for random, HFE and FairBED acquisition. As label budget increases, predictive accuracy plateaus for each attribute and reaches stability. As $\beta$ increases, FairBED generally maintains predictive accuracy in the target attribute and reduces accuracy in the sensitive attribute compared to baselines. Performance between HFE and FairBED is generally on par here however, FairBED has greater DP performance (Fig.~\ref{fig:census_dp}). 

\newpage
\begin{figure}[H]
    \centering
    \includegraphics[width=0.85\linewidth]{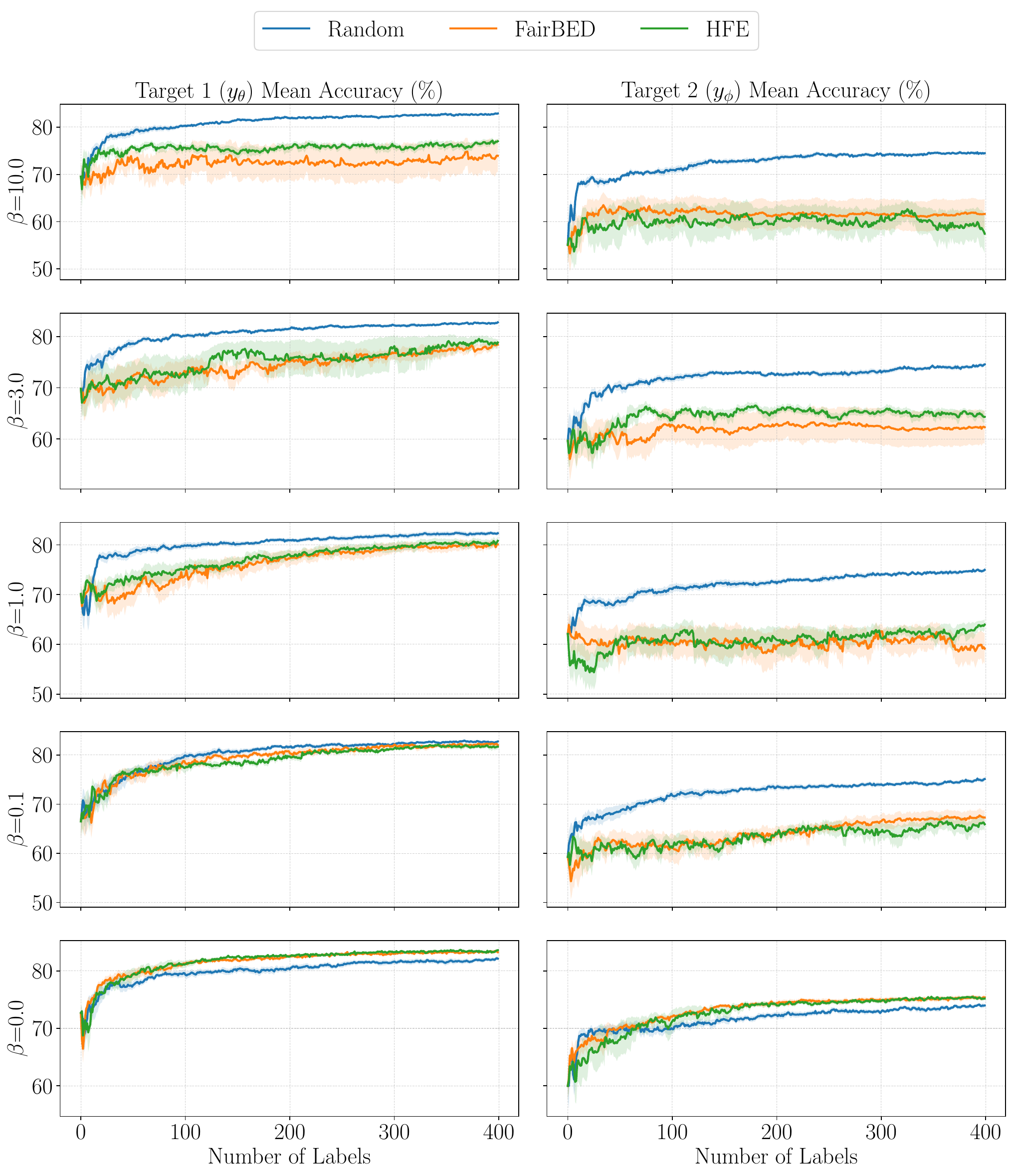}
    \caption{\textbf{Census Income active learning curves (RF evaluation).} Mean test accuracy versus acquisition iteration for the \textbf{target} (income; left column) and the \textbf{sensitive target} (gender; right column), under \textbf{random} label sampling (blue) and \textbf{FairBED}-based acquisition (orange). Each row corresponds to a different $\beta$ coefficient (with $\beta=0$ recovering standard EPIG). Curves are averaged over 8 seeds with shaded regions indicating $\pm 1$ standard error. All other settings follow the Census setup described in Section \ref{sec:experiments} (label budget 400, batch size 1).}
    \label{fig:al_multiple_beta_iteration_curves}
\end{figure}

\FloatBarrier
\newpage
In Fig. \ref{fig:census_dp} we evaluate the DP ratio for an increasing number of labels. As $\beta$ increases, data acquired using FairBED has an improved DP ratio (closer to 1) compared to random acquisition and HFE. Notably data acquired under FairBED significantly improved the DP ratio over HFE when fairness is targeted more strongly (e.g. larger $\beta$ values).

\begin{figure}[H]
    \centering
    \includegraphics[width=0.85\linewidth]{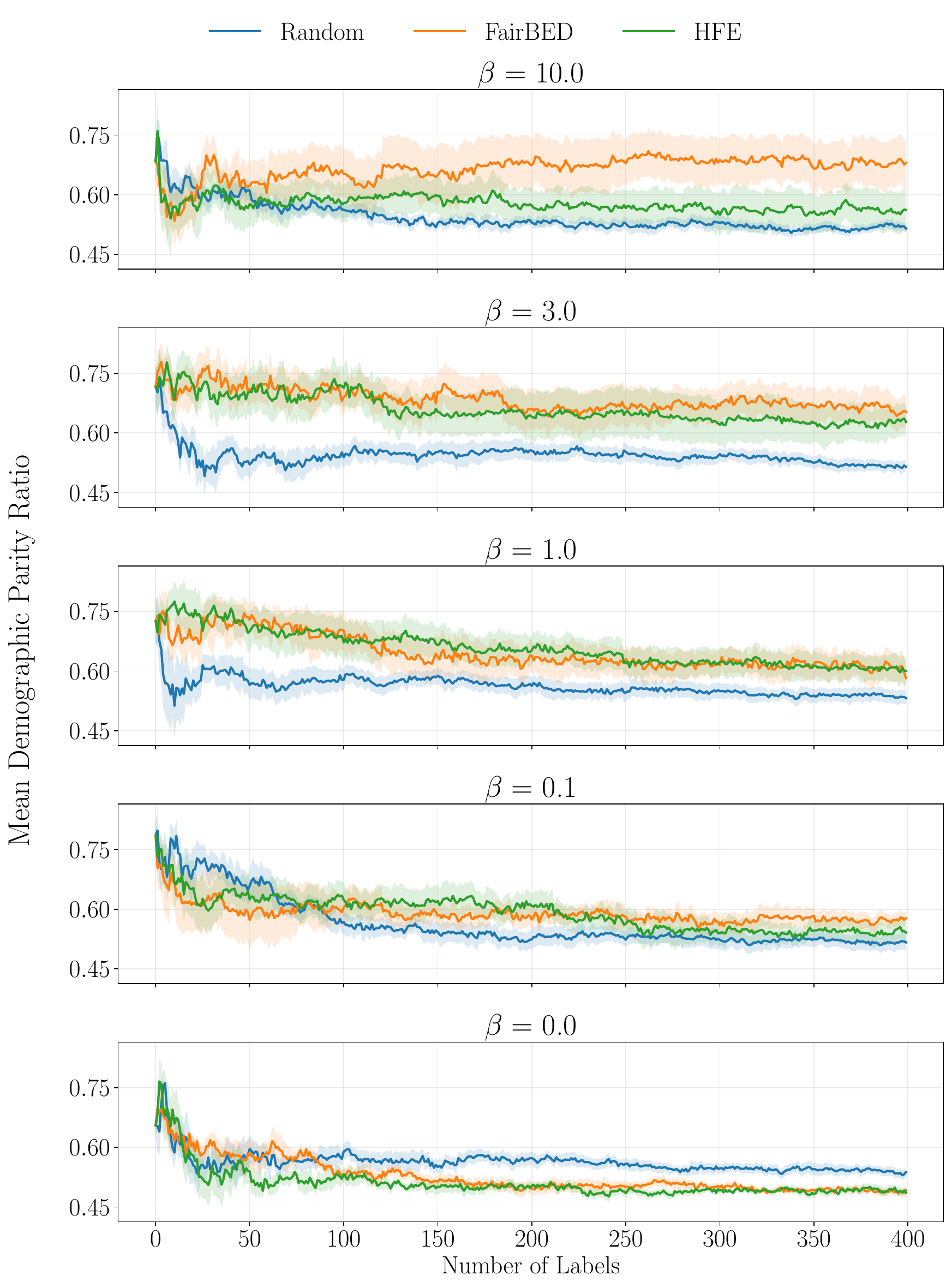}
    \caption{\textbf{Census.} Comparing DP ratio with number of acquired labels for different acquisition strategies. Ratio of 1 is the target. $\beta=0$ Corresponds to EPIG (Orange) and Predictive entropy (Green) baselines.}
    \label{fig:census_dp}
\end{figure}

\newpage
\subsection{Active Learning: Student Graduation}\label{app:graduate}

\FloatBarrier

Fig.~\ref{fig:al_multiple_beta_iteration_curves_graduate} presents target and sensitive attribute accuracy across an increasing number of labels for varying $\beta$ and for random, HFE and FairBED acquisition. We once again demonstrate that predictive accuracy plateaus for each attribute as the label budget increases. As $\beta$ increases, FairBED generally maintains predictive accuracy in the target attribute and significantly reduces predictive accuracy for the sensitive attribute. Performance between HFE and FairBED is generally on par here however, FairBED has greater DP performance (Fig.~\ref{fig:graduate_dp}).

\begin{figure}[H]
    \centering
    \includegraphics[width=0.85\linewidth]{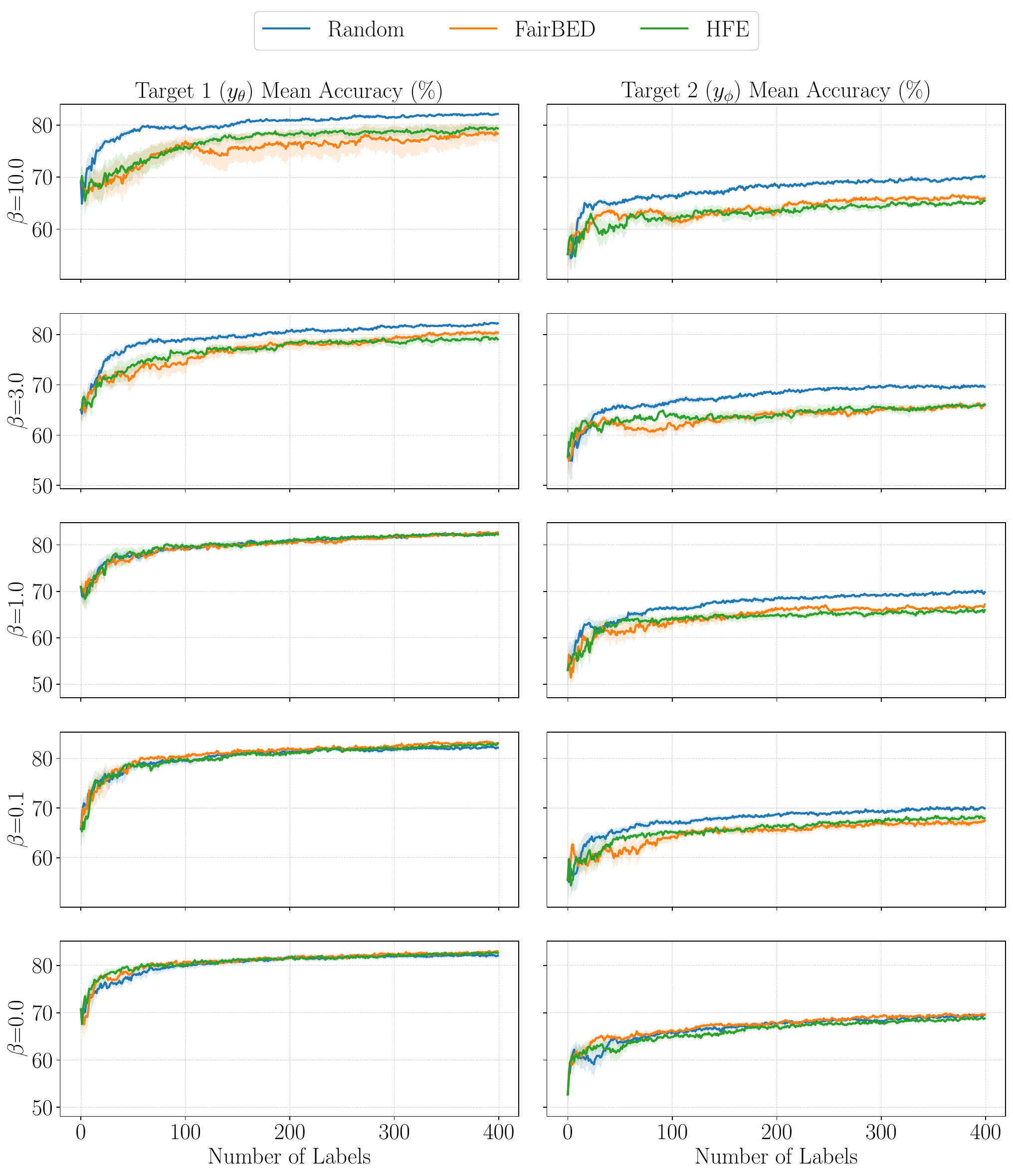}
    \caption{\textbf{Student Graduation active learning curves (RF evaluation).} Mean test accuracy versus acquisition iteration for the \textbf{target} (graduated or not; left column) and the \textbf{sensitive target} (gender; right column), under \textbf{random} label sampling (blue), \textbf{FairBED}-based acquisition (orange) and \textbf{HFE}-based acquisition (green). Each row corresponds to a different $\beta$ coefficient (with $\beta=0$ recovering standard EPIG for FairBED). Curves are averaged over 8 seeds with shaded regions indicating $\pm 1$ standard error. All other settings follow the setup described in Section \ref{sec:experiments} (label budget 400, batch size 1).}
    \label{fig:al_multiple_beta_iteration_curves_graduate}
\end{figure}

\FloatBarrier

\newpage

In Fig. \ref{fig:graduate_dp} we evaluate the DP ratio for an increasing number of labels. Similarly to Fig. \ref{fig:census_dp}, FairBED improves upon the DP ratio of acquired data compared to random and HFE acquisition.

\begin{figure}[H]
    \centering
    \includegraphics[width=0.85\linewidth]{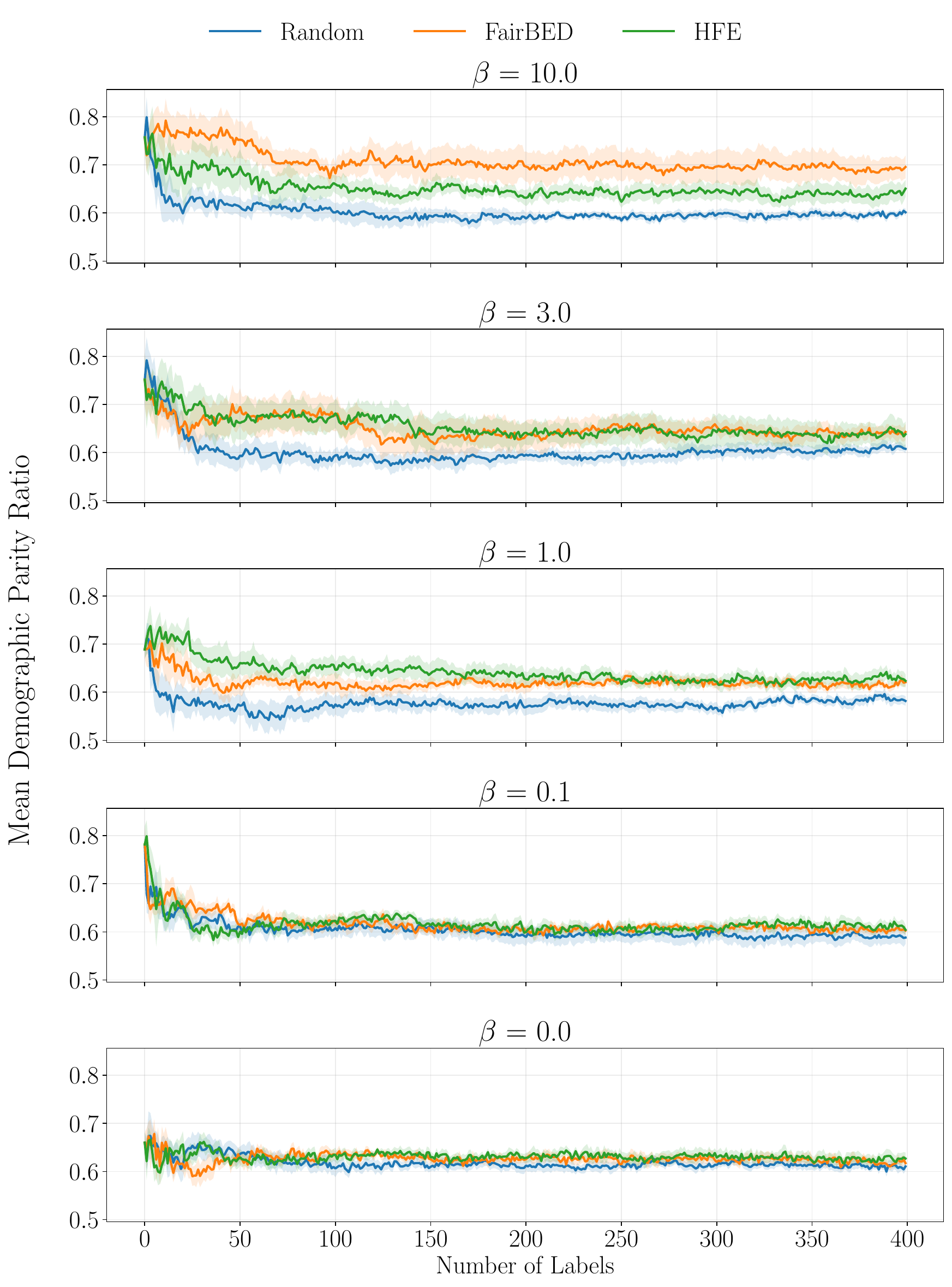}
    \caption{\textbf{Student Graduation.} Comparing DP ratio with number of acquired labels for different acquisition strategies. Ratio of 1 is the target. $\beta=0$ Corresponds to EPIG (Orange) and Predictive entropy (Green) baselines. }
    \label{fig:graduate_dp}
\end{figure}

\newpage
\subsection{Active Learning: CelebA}\label{app:celeba}

\paragraph{Dataset.} CelebA \citep{Liu2014DeepLF} consists of $N{=}202{,}599$ aligned and cropped celebrity face images, each annotated with $40$ binary attributes. Following standard practice in the CelebA fairness literature, we use the \texttt{Smiling} attribute as the binary prediction target and the \texttt{Gender} attribute as the binary sensitive attribute.

\paragraph{Image preprocessing and feature extraction.} CelebA is high-dimensional and non-tabular, so we follow the common protocol of representing each image by an embedding from an ImageNet-pretrained convolutional network rather than the raw pixels. Images are resized so the shorter side is $224$, centre-cropped to $224{\times}224$, converted to tensors, and normalised using the standard ImageNet statistics ($\mu{=}(0.485, 0.456, 0.406)$, $\sigma{=}(0.229, 0.224, 0.225)$). Each image is then passed through a ResNet-18 \citep{he2016resnet} initialised with the default torchvision ImageNet-1k pretrained weights \citep{russakovsky2015imagenet, paszke2019pytorch}. The final classification head is replaced by an identity mapping, so the network outputs the $512$-dimensional penultimate-layer embedding. Forward passes are executed in evaluation mode with batch size $256$ and no gradient tracking. Features are pre-computed once and cached to disk in chunks; the resulting $X \in \mathbb{R}^{202{,}599 \times 512}$ matrix together with the $40$-dim attribute matrix is the input to the active-learning loop.  No fine-tuning of the backbone is performed: the backbone is fixed and the active learning operates entirely on top of the cached embeddings, isolating the contribution of the FairBED acquisition rule from any representation learning effects.

\paragraph{Active learning protocol.} We mirror the protocol used for Census and Student Graduation (Sec.~\ref{app:census}, Sec.~\ref{app:graduate}), modified only to accommodate the larger dataset. For each seed:
\begin{itemize}
    \item The $N{=}202{,}599$ embeddings are split into a pool and a held-out test set using and stratified on the (\texttt{Smiling}) target.
    \item At each acquisition iteration, two random forest acquisition models (one per target, $100$ trees each) are refit from scratch on all labels acquired so far. This is one choice of instantiating the sensitive predictor.
    \item We run with a label budget of $300$ and acquisition batch size $B{=}2$, repeated over $5$ random seeds; reported error bars are $\pm 1$ standard error across seeds. The $\beta$ value used for the headline DP/EO numbers in Tab.~\ref{tab:combined_dp_eo_ratios} is $\beta{=}10$, matching the other AL benchmarks.
\end{itemize}

\paragraph{Compute.} ResNet-18 feature extraction over the full $202{,}599$ images takes a single pass on one GPU and is cached for all subsequent runs. Each AL run is then cheap: forward passes are not required during the AL loop because features are pre-computed, and acquisition models are random forests.

\paragraph{Accuracy and fairness summary.} Tab.~\ref{tab:celeba_acc_fairness} reports test-time accuracy on the target (\texttt{Smiling}) and sensitive predictor (\texttt{Gender}), alongside the DP and EO ratios (as in Tab.~\ref{tab:combined_dp_eo_ratios}). The FairBED ($\beta{=}10$) trades a small drop in target accuracy for a substantial drop in sensitive predictability. This pattern is characteristic of FairBED's mechanism: degrading the predictability of the sensitive attribute in exchange for a modest reduction in target accuracy.

\begin{table}[h!]
    \caption{\textbf{CelebA.} Target ($y_\theta$) and sensitive ($y_\phi$) accuracy together with DP and EO ratios at a label budget of 300, batch size 2, $\beta{=}10$ for HFE and FairBED. Accuracy reported over 5 seeds, fairness ratios reproduced from Tab.~\ref{tab:combined_dp_eo_ratios}. A ratio of 1 is optimal; lower $y_\phi$ accuracy is desirable.}
    \label{tab:celeba_acc_fairness}
    \centering
    \footnotesize
    \setlength{\tabcolsep}{4pt}
    \resizebox{\textwidth}{!}{%
    \begin{tabular}{l cc ccc}
        \toprule
        Method & $y_\theta$ Acc.\ & $y_\phi$ Acc.\ & DP & EO-TPR & EO-FPR \\
        \midrule
        Random             & 0.693\,$\pm$\,0.005 & 0.890\,$\pm$\,0.002 & 0.65\,$\pm$\,0.09 & 0.70\,$\pm$\,0.08 & 0.71\,$\pm$\,0.10 \\
        Predictive Entropy & 0.679\,$\pm$\,0.004 & 0.890\,$\pm$\,0.002 & 0.77\,$\pm$\,0.08 & 0.80\,$\pm$\,0.07 & 0.82\,$\pm$\,0.08 \\
        EPIG               & 0.689\,$\pm$\,0.004 & 0.890\,$\pm$\,0.001 & 0.80\,$\pm$\,0.02 & 0.84\,$\pm$\,0.02 & \textbf{0.89\,$\pm$\,0.05} \\
        HFE                & 0.631\,$\pm$\,0.015 & 0.685\,$\pm$\,0.037 & 0.82\,$\pm$\,0.07 & \textbf{0.90\,$\pm$\,0.04} & 0.61\,$\pm$\,0.03 \\
        \textbf{FairBED-EPIG} & 0.645\,$\pm$\,0.013 & \textbf{0.684\,$\pm$\,0.039} & \textbf{0.83\,$\pm$\,0.03} & \textbf{0.90\,$\pm$\,0.06} & 0.77\,$\pm$\,0.09 \\
        \bottomrule
    \end{tabular}%
    }
\end{table}

\newpage
\section{Fairness definitions}\label{app:fairness definitions}
In this section we present fairness definitions in terms of a binary classifier with true labels $Y=\{0,1\}$, predicted labels $\hat{Y}=\{0,1\}$ and sensitive attribute $\Phi=\{\phi_1,\phi_2\}$.
\newline
\newline
\textbf{Predictive Parity \citep{delbarrio2020reviewmathematicalframeworksfairness}}
Within supervised learning, \textit{predictive parity} requires equal precision across sensitive groups, i.e., the probability that the predicted positive instances are actually positive should be equal across sensitive attributes.

\begin{equation}
    P(Y = 1 \mid \hat{Y} = 1, \Phi = \phi_1) = P(Y = 1 \mid \hat{Y} = 1, \Phi = \phi_2),
\end{equation}

where $Y$ are the true labels and $\hat{Y}$ are the predicted labels.
This fairness metric is increasingly prioritized within healthcare \citep{gao2025fair}. Additionally, it is simple to extend this fairness definition to a continuous parameter setting as it implies an invariance across a subset of the sensitive attribute space. Thus, this fairness definition more naturally fits within fair decision making problems compared to fair parameter inference problems. 

\textbf{Equalised Odds \citep{NIPS2016_9d268236}}
A predictor satisfies \textit{equalised odds} if its true positive rate and false positive rate are equal across sensitive groups. Formally, for binary outcomes $Y \in \{0, 1\}$ and sensitive attribute $\Phi$:

\begin{equation}
P(\hat{Y} = 1 \mid Y = y, \Phi = \phi_1) = P(\hat{Y} = 1 \mid Y = y, \Phi = \phi_2) \quad \text{for } y \in \{0, 1\}
\end{equation}

Equalised odds asks for parity in error behavior, not parity in selection rate. It is about fairness of mistakes relative to the ground truth.

\textbf{Equal Opportunity \citep{NIPS2016_9d268236}}
\textit{Equal opportunity} is a relaxation of equalized odds, focusing only on the true positive rate. Formally,

\begin{equation}
P(\hat{Y} = 1 \mid Y = 1, \Phi = \phi_1) = P(\hat{Y} = 1 \mid Y = 1, \Phi = \phi_2).
\end{equation}

This ensures that individuals who truly belong to the positive class have equal chances of being correctly predicted as such, irrespective of group membership.

\section{Impossibility result: Trade-off between Demographic Parity and Equalized Odds.}
\label{app:impossibility_result}
Let $\Phi=\{\phi_1,\phi_2\}$ denote a binary sensitive attribute, $Y \in \{0,1\}$ the true label, and $\hat{Y} \in \{0,1\}$ the prediction of a classifier.

\emph{Demographic parity} requires that predictions be independent of the sensitive attribute:
\[
\hat{Y} \perp \Phi,
\quad \text{i.e.,} \quad
P(\hat{Y}=1 \mid \Phi=\phi_1) = P(\hat{Y}=1 \mid \Phi=\phi_2).
\]

\emph{Equalized odds} requires that predictions be conditionally independent of the sensitive attribute given the true label:
\[
\hat{Y} \perp \Phi \mid Y,
\quad \text{i.e.,} \quad
P(\hat{Y}=1 \mid Y=y, \Phi=\phi_1) = P(\hat{Y}=1 \mid Y=y, \Phi=\phi_2)
\quad \forall y \in \{0,1\}.
\]

In general, these two fairness criteria are incompatible when base rates differ across groups, i.e.,
\[
P(Y=1 \mid \Phi=\phi_1) \neq P(Y=1 \mid \Phi=\phi_2).
\]
Specifically, except in degenerate cases (trivial classifiers, or equal base rates), no nontrivial classifier can simultaneously satisfy both demographic parity and equalized odds \citep{hsu2022pushing}. This incompatibility reflects a fundamental trade-off between equalizing prediction rates across groups and equalizing error rates across groups.

\section{Reject Option Classification (ROC)}
\label{app:roc}

ROC \citep{kamiran2012decision} is a post-processing fairness intervention. We include
it not as a competitor to FairBED but as a complementary intervention at a
different point in the pipeline: FairBED shapes \emph{which data is collected},
ROC re-labels predictions \emph{after} a model has been fit. The two are
orthogonal and can be composed.

\paragraph{Method.}
Given a probabilistic classifier $f(x) = p(\hat{y}{=}1 \mid x)$, ROC defines a
\emph{critical region} of low-confidence predictions, $|f(x) - t| < \delta$, for
threshold $t \in (0,1)$ and margin $\delta > 0$. Inside this region the
prediction is overridden using the sensitive attribute: instances from the
unprivileged group are assigned the favourable label, instances from the
privileged group the unfavourable one. Outside the region, the standard decision
$\mathbf{1}_{\{f(x)>t\}}$ is used. The pair $(t, \delta)$ is selected on a
held-out set to maximise balanced accuracy subject to a fairness constraint.
In this work we target Demographic Parity (DP) with the AIF360-default band \citep{aif360-oct-2018} $[-0.05, 0.05]$ on the DP difference.

\paragraph{ROC conflates model defects with real-world differences.}
DP is the constraint
\(p(z{=}1 \mid \phi{=}\phi_1) = p(z{=}1 \mid \phi{=}\phi_2)\),
which expands to
\[
p(z{=}1 \mid \phi) = \int p(z{=}1 \mid x, \phi)\, p(x \mid \phi)\, dx.
\]
A DP gap can therefore arise from two distinct sources: (i) the conditional
predictor $p(z{=}1 \mid x, \phi)$ depending on $\phi$, a genuine
\emph{biased model}, or (ii) the input distribution $p(x \mid \phi)$ differing
across groups, reflecting a real-world distributional difference that the
predictor has correctly captured. These are fundamentally different problems and
demand different responses, but ROC cannot distinguish them: it operates on the
marginal $p(z{=}1 \mid \phi)$ and applies the same group-conditional flip
regardless of which source the gap originates from. In doing so it conflates
correcting a biased model with overwriting an accurate one, destroys the
diagnostic that would let a practitioner tell the two apart, and leaves no
record of which (if any) was actually wrong. The intervention is opaque by
construction. Upstream methods like FairBED avoid this failure mode entirely
by acting on the data-generating process, where the locus of fairness is
explicit and inspectable.

\paragraph{Design choices.}
We integrate ROC into the FairBED active-learning setting as follows. The
labelled budget ($N{=}400$ samples) is the only data with sensitive labels
available, so ROC's validation set must be carved out of it: we use a stratified
70/30 split on the joint $(y_\theta, y_\phi)$ label, giving roughly 280 train
and 120 val samples. The base classifier is a Random Forest with 100 trees,
matching the rest of the pipeline so that any difference between RF and RF+ROC
isolates the post-processing effect rather than the choice of estimator. We
search $(t, \delta)$ on a $100 \times 50$ grid over $t \in [0.01, 0.99]$ and
$\delta \in [0, \min(t, 1-t)]$, the AIF360 reference grid \citep{aif360-oct-2018}.

\paragraph{Drawbacks.}
Two further limitations of ROC are worth noting. First, it requires sensitive
labels on a validation set at fit time. Second, ROC
only adjusts the decision \emph{threshold} on a fixed score function: it can
flip predictions near the boundary but cannot recover from a base classifier
whose scores are themselves miscalibrated or insufficiently informative for the
unprivileged group. When the base classifier is uninformative on a group, ROC's
flips trade off accuracy without restoring useful predictive structure, a regime
in which upstream interventions like FairBED are necessary rather than optional.

\paragraph{Comparison to a downstream baseline (Census, ROC).}
Table~\ref{tab:roc_census} situates FairBED alongside Random+ROC at the
AIF360-default constraint band $[-0.05, 0.05]$. These results are to demonstrate that approaches at the data acquisition stage can still be performant for downstream notions of fairness. ROC's validation set is drawn
from a stratified split of the labelled subset.

\begin{table}[!t]
\caption{Census ROC post-processing on randomly acquired data ( $N=400$, 8 seeds, AIF360 default band $[-0.05, 0.05]$). FairBED at $\beta = 10$. Ratio of 1 is optimal.}
\centering
\begin{tabular}{lcccc}
\toprule
Method & Acc (t1) & DP Ratio & EO Ratio - TPR & EO Ratio - FPR \\
\midrule
Random & 0.82 $\pm$ 0.00 & 0.55 $\pm$ 0.02 & 0.88 $\pm$ 0.01 & 0.72 $\pm$ 0.04 \\
ROC ($[-0.05, 0.05]$) & 0.75 $\pm$ 0.01 & 0.88 $\pm$ 0.05 & 0.90 $\pm$ 0.01 & 0.56 $\pm$ 0.06 \\
\textbf{FairBED - EPIG} \eqref{eq:fairbed_AL} & 0.76 $\pm$ 0.01 & 0.68 $\pm$ 0.06 & 0.93 $\pm$ 0.02 & 0.86 $\pm$ 0.03 \\
\bottomrule
\end{tabular}
\label{tab:roc_census}
\end{table}

\section{Learned Fair Representations (LFR)}
\label{app:lfr}

LFR \citep{fair-representations} is a pre-processing fairness intervention. As with ROC,
we include it not as a competitor to FairBED but as a complementary
intervention at a different point in the pipeline: FairBED shapes \emph{which
data is collected}, LFR re-encodes the features of an already-collected
dataset \emph{before} a downstream model is fit. The two are orthogonal and
can be composed (Fig.~\ref{fig:lfr-graduate-dp-acc}).

\paragraph{Method.}
LFR maps each input $x$ to a probability distribution over $K$ learned
prototypes $\{v_k\}_{k=1}^K$ via a softmax of negative distances,
\(P(Z{=}k \mid x) \propto \exp(-d(x, v_k))\),
and then to a label via a learned linear head $w$ on the soft assignments.
Three terms are jointly optimised: a reconstruction loss $L_x$ (the prototypes
should preserve information about $x$), a prediction loss $L_y$ (the head
should predict the target), and a fairness loss
\(L_z = \sum_k |\,M_k^+ - M_k^-\,|\) penalising any difference between the
average prototype-assignment vector for the privileged and unprivileged groups.
The fairness loss enforces statistical parity of the \emph{representation}
itself, with weight $A_z$ controlling the strength of the constraint.
Downstream models are then trained on the soft-assignment vectors instead of
the raw features.

\paragraph{Design choices.}
We integrate LFR into the FairBED active-learning setting as follows. The
labelled budget ($N{=}400$ samples) is the only data with sensitive labels
available, and LFR is fit on this set alone. We use $K{=}5$ prototypes and
the LFR-default hyperparameters $A_x{=}0.01$, $A_y{=}1.0$, with $A_z$ swept
across $\{0.1, 1, 10\}$. Optimisation uses PyTorch L-BFGS with autograd over the joint
objective and runs for up to 5000 steps; inputs are standardised before
prototype fitting. Downstream models are then trained on the resulting
soft-assignment vectors, matching the rest of the pipeline so that any
difference between RF and LFR$+$RF isolates the representation effect rather
than the choice of estimator.


\end{document}